\documentclass{ecai}
\usepackage{times}
\usepackage{graphicx}
\usepackage{latexsym}

\usepackage[ruled, vlined]{algorithm2e}
\usepackage{amsfonts,bm}
\usepackage[hidelinks]{hyperref}
\usepackage{mathtools}
\usepackage{microtype}
\usepackage{nicefrac} 
\usepackage{times}
\usepackage{soul}
\usepackage{url}
\usepackage{floatrow}
\newfloatcommand{capbtabbox}{table}[][\FBwidth]
\usepackage{pdfpages}
\usepackage{floatrow}
\floatsetup[table]{capposition=top}
\newcommand{\ignore}[1]{}
\DeclareMathOperator*{\argmax}{arg\,max}

\newcommand{\absl}[1]{\left \lvert #1 \right \rvert}
\newcommand{\bigo}[1]{\mathcal{O}\left ( #1 \right )}

\newcommand{\fantom}{\textsc{Fantom}\xspace}
\newcommand{\kgreedy}{\textsc{$\lambda$-Greedy}\xspace}

\newcommand{\opt}{\textsc{opt}\xspace}

\newcommand{\fantomalg}{\textsc{Fantom}}
\newcommand{\dgreedy}{\textsc{$\lambda$-dGreedy}\xspace}

\newtheorem{definition}[theorem]{Definition}
\newtheorem{lemma}[theorem]{Lemma}
\newtheorem{problem}[theorem]{Problem}
\newtheorem{reduction}[theorem]{Reduction}
\newenvironment{proof}{\paragraph{Proof:}}{\hfill$\square$\newline}

\setlength{\marginparwidth}{2cm}
\usepackage[textsize=tiny]{todonotes}

\begin{document}

\title{Non-Monotone Submodular Maximization with Multiple Knapsacks in Static and Dynamic Settings.}

\author{Vanja Dosko\v{c}\institute{Hasso Plattner Institute, email: vanja.doskoc@hpi.de} \and Tobias Friedrich\institute{Hasso Plattner Institute, email: friedrich@hpi.de} \and  Andreas G\"{o}bel\institute{Hasso Plattner Institute, email: andreas.goebel@hpi.de} \and Aneta Neumann\institute{The University of Adelaide, email: aneta.neumann@adelaide.edu.au}\\ \and  Frank Neumann\institute{The University of Adelaide, email: frank.neumann@adelaide.edu.au} \and Francesco Quinzan\institute{Hasso Plattner Institute, email: francesco.quinzan@hpi.de} }

\maketitle
\bibliographystyle{ecai.bst}

\begin{abstract}
We study the problem of maximizing a non-monotone submodular function under multiple knapsack constraints. We propose a simple discrete greedy algorithm to approach this problem, and prove that it yields strong approximation guarantees for functions with bounded curvature. In contrast to other heuristics, this does not require problem relaxation to continuous domains and it maintains a constant-factor approximation guarantee in the problem size. In the case of a single knapsack, our analysis suggests that the standard greedy can be used in non-monotone settings.

Additionally, we study this problem in a dynamic setting, in which knapsacks change during the optimization process. We modify our greedy algorithm to avoid a complete restart at each constraint update. This modification retains the approximation guarantees of the static case. 

We evaluate our results experimentally on a video summarization and sensor placement task. We show that our proposed algorithm competes with the state-of-the-art in static settings. Furthermore, we show that in dynamic settings with tight computational time budget, our modified greedy yields significant improvements over starting the greedy from scratch, in terms of the solution quality achieved. 
\end{abstract}

\section{INTRODUCTION}
\label{sec:intro}

Many artificial intelligence and machine learning tasks can be naturally approached by maximizing submodular objectives. Examples include subset selection~\cite{DasK18}, document summarization~\cite{Lin:2010}, video summarization \cite{MirzasoleimanBK16} and action recognition \cite{Zheng:2014}. Submodular functions are set functions that yield a diminishing return property: it is more helpful to add an element to a smaller collection than to add it to a larger one. This property fully characterizes the notion of submodularity.

Practical applications often require additional side constraints on the solution space, determined by possible feasibility conditions. These constraints can be complex \cite{Lin:2010,MirzasoleimanJ018,YuXC18}. For instance, when performing video summarization tasks, we might want to select frames that fulfill costs constraints based on qualitative factors, such as resolution and luminance.

In this paper, we study general multiple knapsack constraints. Given a set of solutions, a $k$-knapsack constraint consists of $k$ linear cost functions $c_i$ on the solution space and corresponding weights $W_i$. A solution is then feasible if the corresponding costs do not exceed the weights. In this paper, we study the problem of maximizing a submodular function under a $k$-knapsack constraint.

Sometimes, real-world optimization problems involve dynamic and stochastic constraints~\cite{Chiong2012}. For instance, resources and costs can exhibit slight frequent changes, leading to changes of the underlying space of feasible solutions. Various optimization problems have been studied under dynamically changing constraints, i.e., facility location problems~\cite{Jena:2016}, target tracking~\cite{FazlyabPPR16}, and other submodular maximization problems for machine learning~\cite{Borodin:2017}. Motivated by these applications, we also study the problem of maximizing a submodular function under a $k$-knapsack constraint, when the set of feasible solutions changes online.
\paragraph{Literature Overview.} Khuller, Moss and Naor~\cite{DBLP:journals/ipl/KhullerMN99} show that a simple greedy algorithm achieves a $1/2(1-1/e)$-approximation guarantee, when maximizing a modular function with a single knapsack constraint. They also propose a modified greedy algorithm that achieves a $(1-1/e)$-approximation. Sviridenko~\cite{DBLP:journals/orl/Sviridenko04} shows that this modified greedy algorithm yields a $(1-1/e)$-approximation guarantee for monotone submodular functions under a single knapsack constraint. Its run time is $O(n^5)$ function evaluations. The optimization of monotone submodular functions under a given chance constraint by an adaptation of these simple greedy algorithm has recently been investigated by Doerr et al.~\cite{DoerrAAAI20}.

Lee et al.~\cite{DBLP:conf/stoc/LeeMNS09} give a $(1/5-\varepsilon)$-approximation local search algorithm, for maximizing a non-monotone submodular function under multiple knapsack constraints. Its run time is polynomial in the problem size and exponential in the number of constraints. Fadaei, Fazli and Safari~\cite{FADAEI2011447} propose an algorithm that achieves a ($1/4-\varepsilon$)-approximation algorithm for non-monotone functions. This algorithm requires to compute fractional solutions of a continuous extension of the value oracle function $f$. Chekuri, Vondr{\'{a}}k and Zenklusen~\cite{DBLP:journals/siamcomp/ChekuriVZ14} improve the approximation ratio to $0.325-\varepsilon$, in the case of $k = \bigo{1}$ knapsacks. Kulik, Schachnai and Tamir~\cite{Kulik13} give a $(1-1/e-\varepsilon)$-approximation algorithm when $f$ is monotone and a $(1/e-\varepsilon)$-approximation algorithm when the function is non-monotone. Again, their method uses continuous relaxations of the discrete setting. \fantom{} is a popular algorithm for non-monotone submodular maximization \cite{DBLP:conf/icml/MirzasoleimanBK16}. It can handle intersections of a variety of constraints. In the case of multiple knapsack constraints, it achieves a $1/((1 + \varepsilon)(10 + 4 k))$-approximation in $\bigo{n^2 \log(n) / \varepsilon}$ run time.

Submodular optimization problems with dynamic cost constraints, including knapsack constraints, are investigated in \cite{DBLP:conf/ppsn/Roostapour0N18,DBLP:journals/corr/abs-1811-07806}. Rostapoor et al.~\cite{DBLP:journals/corr/abs-1811-07806}  show that a Pareto optimization approach can implicitly deal with dynamically changing constraint bounds, whereas a simple adaptive greedy algorithm fails.

\paragraph{Our Contribution.} Many of the aforementioned algorithmic results, despite having polynomial run time, seem impractical for large input applications. Following the analysis outlined in \cite{DBLP:journals/dam/ConfortiC84,DBLP:journals/corr/abs-1811-05351,DBLP:journals/ipl/KhullerMN99}, we propose a simple and practical discrete algorithm to maximize a submodular function under multiple knapsack constraints. This algorithm, which we call the \kgreedy, achieves a $(1 - e^{ -1/ \lambda})/(3 \max(1, \alpha))$-approximation guarantee on this problem, with $\alpha$ expressing the curvature of $f$, and $\lambda \in [1, k]$ a constant. It requires at most $\mathcal{O}(n^{\max(k/\lambda, 2)})$ function evaluations. To our knowledge this is the first algorithm yielding a trade-off between run-time and approximation ratio.\\
We also propose a robust variation of our \kgreedy, which we call \dgreedy{}, to handle dynamic changes in the feasibility region of the solution space. We show that this algorithm maintains a $(1 - e^{ -1/ \lambda})/(3 \max(1, \alpha))$-approximation without having to do a complete restart of the greedy sequence.

We demonstrate experimentally that our algorithms yield good performance in practise, with two real-world scenarios. First, we consider a video summarization task, which consists of selecting representative frames of a given video \cite{MirzasoleimanJ018,DBLP:conf/icml/MirzasoleimanBK16}. We also consider a sensor placement problem, that asks to select informative thermal stations over a large territory \cite{DBLP:journals/jmlr/KrauseSG08}.\\
Our experiments indicate that the \kgreedy{} yields superior performance to commonly used algorithms for the static video summarization problem. We then perform experiments in dynamic settings with both scenarios, to show that the robust variation yields significant improvement in practise.

The paper is structured as follows. In Section \ref{sec:problem_description} we introduce basic definitions and define the problem. In Section \ref{sec:algorithms} we define the algorithms. We present the theoretical analysis in Section \ref{sec:approx_ratios}, and the experimental framework in Section \ref{sec:experimental_framework}. The experimental results are discussed in Section \ref{sec:experiments_static} and Section \ref{sec:experiments_dynamic}. We conclude in Section \ref{sec:conclusions}.

\section{PRELIMINARIES}
\label{sec:problem_description}
\paragraph{Submodularity and Curvature}
In this paper, we consider problems with an \emph{oracle function} that outputs the quality of given solution. We measure performance in terms of calls to this function, since in many practical applications they are difficult to evaluate. We assume that value oracle functions are submodular, as in the following definition.
\begin{definition}[Submodularity]
\label{def:submodular}
Given a finite set $V$, a set function $f\colon 2^V\rightarrow \mathbb{R}$ is submodular if  for all $S,\Omega\subseteq V$ we have that $f(S) + f(\Omega) \geq f(S \cup \Omega) + f(S \cap \Omega)$.
\end{definition}

For the equivalent intuitive definition described informally in the introduction see~\cite{DBLP:journals/mp/NemhauserWF78}. 

For any submodular function $f\colon 2^V \rightarrow \mathbb{R}$ and sets $S, \Omega \subseteq V$, we define the \emph{marginal value} of $\Omega$ with respect to $S$ as $f_S(\Omega) = f(S\cup \Omega) - f(S)$. Note that, if $f$ only attains non-negative values, it holds that $f(\Omega) \geq f_S(\Omega)$ for all $S, \Omega \subseteq V$. 

Our approximation guarantees use the notion of \emph{curvature}, a parameter that bounds the maximum rate with which a submodular function changes. We say that a submodular function $f\colon 2^V\rightarrow \mathbb{R}_{\geq 0}$ has curvature $\alpha$ if the value $f(S\cup e) - f(S)$ does not change by a factor larger than $1 - \alpha$ when varying $S$, for all $e \in V \setminus S$. This parameter was first introduced by \cite{DBLP:journals/dam/ConfortiC84} and later revisited in \cite{DBLP:conf/icml/BianB0T17}. We use the following definition of curvature, which slightly generalizes that proposed in Friedrich et al. \cite{DBLP:journals/corr/abs-1811-05351}. 
\begin{definition}[Curvature]
\label{def:generalized_curvature}
Let $f\colon 2^V \rightarrow \mathbb{R}_{\geq 0}$ be a submodular set function. The curvature is the smallest scalar $\alpha$ such that
\[
f_\omega ((S \cup \Omega )\setminus \{\omega\}) \geq (1 - \alpha)f_\omega (S \setminus \{\omega\}),
\]
for all $S, \Omega \subseteq V$ and $\omega \in S\setminus \Omega$.
\end{definition}
Note that $\alpha \leq 1$ always holds and that all monotone submodular functions have curvature always bounded as $0 \leq \alpha \leq 1$. It follows that all submodular functions with negative curvature are non-monotone.

\paragraph{Problem Description}
The problem of maximizing a submodular function under multiple knapsack constraints can be formalized as follows.
\begin{problem}
\label{problem:static}
Let $f\colon 2^V \rightarrow \mathbb{R}_{\geq 0}$ be a submodular function.\footnote{We assume that $f(\emptyset) = 0$, and that $f$ is non-constant.} Consider linear cost functions $c_i\colon 2^V \rightarrow \mathbb{R}_{\geq 0}$,\footnote{We assume that $\max_j c_j(e) > 0$ for all $e \in V$.} and corresponding weights $W_i$, for all $i \in [k]$. We search for a set $\opt \subseteq V$, such that $\opt \in \argmax_{S \subseteq V} \left \{ f(S )\colon c_i(S) \leq W_i, \forall i\in [k] \right \}$.
\end{problem}
In this setting, one has $k$ knapsacks and wishes to find an optimal set of items such that its total cost, expressed by the functions $c_i$, does not violate the capacity of each knapsack. Note that the same set might have different costs for different knapsacks.\\
We denote with $(c, W)$ the constraint requirements $c_i(S) \leq W_i$ for all $S \subseteq V$, for all $i \in [k]$. For a ground set $V$ with a fixed ordering on the points and a cost function $c_i$, where $i\in[k]$, let $\tau_i$ be a permutation on $V$ where $c_i(e_{\tau_i(1)}) \geq \dots \geq c_i(e_{\tau_i(n)})$. We define the value $\chi(c, W)$ as 
\[
\chi(c, W) = \min_{i \in [k]} \argmax_{j \in [\absl{V}]} \left \{ \sum_{\ell = 1}^j c_i(e_{\tau_i(\ell )}) \leq W_i \right \}.
\]
Note that the value $\chi(c, W)$ is such that each set $U \subseteq V$ with cardinality $\absl{U} \leq \chi (c, W)$ is feasible for all constraints in $(c, W)$.\footnote{We remark that for our purposes, the value $\chi(c, W)$ could be defined directly as the value where each set $U \subseteq V$ with cardinality $\absl{U} \leq \chi (c, W)$ is feasible under the constraints in $(c, W)$. However, this value is in general NP-hard to compute \cite{CHO20072456,641542}.}\\
We observe that in the case of a single knapsack, if $c_1(S) = \absl{S}$ for all $S \subseteq V$, then Problem \ref{problem:static} consists of maximizing a submodular function under a cardinality constraint, which is known to be NP-hard. 

In our analysis we always assume that the following reduction holds.
\begin{reduction}
\label{reduction:static}
For Problem \ref{problem:static} we may assume that there exists a point $e^* \in V$ such that $f(S \cup e^*) = f(S)$ for all $S \subseteq V$, and $c_i(e^*) = W_i$ for all $i \in [k]$. Furthermore, we may assume that $c_i(e) \leq W_i$ for all $e \in V$, for all $i \in [m]$.
\end{reduction}
If the conditions of Reduction~\ref{reduction:static} do not hold, one can remove all points $e \in V$ that violate one of the constraints and add a point $e^*$ without altering the function $f$.
Intuitively, Reduction \ref{reduction:static} requires that each singleton, except for one, is feasible for all knapsack constraints. This ensures that $\argmax_{e \in V}f(e)$ is always feasible in all constraints, since $f(e^*) = 0$, and the optimum solution consists of at least one point. Furthermore, the point $e^* \in V$ ensures that the solution quality never decreases throughout a greedy optimization process, until a non-feasible solution is reached.

Additionally, we study a dynamic setting of Problem \ref{problem:static}, in which weights $W_i$ are repeatedly updated throughout the optimization process, while the corresponding cost functions $c_i$ remain unchanged. In this setting, we assume that an algorithm queries a function to retrieve the weights $W_i$ which are, sometimes, updated online. We assume that weights changes occur independently of the optimization process and algorithmic operations. Furthermore, we assume that Reduction \ref{reduction:static} holds for each dynamic update.
\section{ALGORITHMS}
\label{sec:algorithms}
\begin{algorithm}[t]
	\caption{The \kgreedy{} algorithm.}
	\label{alg:static}
		\textbf{input}: submodular function $f$, $k$-knapsacks $(c, W)$ and parameter $\lambda$ \;
	\textbf{output}: an approximate feasible global maximum of $f$\;
	$\mathcal{V}\gets \{e \in V \colon c_j(e) \leq \lambda W_j/k \ \forall j \in [k]\}$\;
	$S \gets \mathcal{V}$\; 
	$\sigma \gets \emptyset $\;
	$v^* \gets \argmax_{e\in V}f(e)$\;
 	\While{$S \neq \emptyset$}{
 	        let $e\in S$ maximizing $f_{\sigma}(e)/\max_j c_j(e)$\;
            $S \gets S \setminus e$\;
            \lIf{$c_j(\sigma \cup e) \leq W_j, \forall j \in [k]$}{$\sigma \gets \sigma \cup e$}
   	}
   	\textbf{return} $\argmax_{\{U \subseteq V\setminus \mathcal{V} \colon c_i(U) \leq W_i \ \forall i \}} \{f(\sigma), f(v^*), f(U) \}$\;
\end{algorithm}
\begin{algorithm}[t]
	\caption{The \dgreedy{} algorithm.}
	\label{alg:dynamic}
		\textbf{input}: submodular function $f$, $k$-knapsacks $(c, W)$ with dynamic weights and parameter $\lambda$\;
	\textbf{output}: an approximate feasible global maximum of $f$\;
	evaluate $f$ over all sets of cardinality at most $k/\lambda$\;
	$\mathcal{V}\gets \{e \in V \colon c_j(e) \leq \lambda W_j/k \ \forall j \in [k]\}$\;
	$S \gets \mathcal{V}$\; 
	$\sigma \gets \emptyset $\;
	$v^* \gets \argmax_{e\in V}f(e)$\;
 	\While{$S \neq \emptyset$}{            
 	        $\ $\\
            // greedy rule 	\\        
            let $e\in S$ maximizing $f_{\sigma}(e)/\max_j c_j(e)$\;
            $S \gets S \setminus e$\;
            \lIf{$c_i(\sigma \cup e) \leq W_i, \forall i \in [k]$}{$\sigma \gets \sigma \cup v$}
            $\ $\\
            // update rule\\
            \If{new weights $\{W_i'\}$ are given}
            {
                $\mathcal{V}'\gets \{e \in V \colon c_j(e) \leq \lambda W'_j/k \ \forall j \in [k]\}$\;
                \While{$\absl{\sigma} > \min \{\chi(c, W), \chi (c, W')\} \lor \sigma \not \subseteq \mathcal{V} \cap \mathcal{V}'$
                }
                {
                let $e \in \sigma$ be the last point added to $\sigma$\;
                $\sigma \gets \sigma \setminus e$\;
                }
                $\mathcal{V} \gets \mathcal{V}'$\;
	            $S \gets \mathcal{V} \setminus \sigma$\; 
            }
   	}
   	\textbf{return} $\argmax_{\{U \subseteq V\setminus \mathcal{V} \colon c_i(U) \leq W \ \forall i \}} \{f(\sigma), f(v^*), f(U) \}$\;
\end{algorithm}
We approach Problem \ref{problem:static} with a discrete algorithm based on a greedy technique, commonly used to maximize a submodular function under a single knapsack constraint (see \cite{DBLP:journals/ipl/KhullerMN99,DBLP:conf/aaai/ZhangV16}). Given the parameter value $\lambda\in[k]$, our algorithm defines the following partition of the objective space:
\begin{enumerate}
    \item[$\bullet$] the set $\mathcal{V}$ containing all $e \in V$ such that $c_j \leq \lambda W_j/k$ for all $j \in [k]$;
    \item[$\bullet$] the complement $V \setminus \mathcal{V}$ containing all $e \in \mathcal{V}$ such that $c_j(e) > \lambda W_j/k$ for all $j \in [k]$.
\end{enumerate}
The $\kgreedy{}$ optimizes $f$ over the set $\mathcal{V}$, with a greedy update that depends on all cost functions $c_j$. After finding a greedy approximate solution $\sigma$, the $\kgreedy{}$ finds the optimum $\tau$ among feasible subsets of $V\setminus \mathcal{V}$. This step can be performed with a deterministic search over all possible solutions, since the space $V\setminus \mathcal{V}$ always has bounded size. The $\kgreedy{}$ outputs the set with highest $f$-value among $\sigma, \tau$ or the maximum among the singletons.

From the statement of Theorem \ref{theorem:static} we observe that the parameter $\lambda$ sets a trade-off between solution quality and run time. For small $\lambda $, Algorithm \ref{alg:static} yields better approximation guarantee and worse run time, than for large $\lambda$. This is due to the fact that the size of $\mathcal{V}\setminus V$ depends on this parameter. In practise, the parameter $\lambda$ allows to find the right trade-off between solution quality and run time, depending on available resources. Note that in the case of a single knapsack constraint, for $\lambda = k$ the \kgreedy{} is equivalent to the greedy algorithm studied in \cite{DBLP:journals/ipl/KhullerMN99}.
 
We modify the \kgreedy{} to handle dynamic constraints where weights change overtime. This algorithm, which we refer to as the \dgreedy{}, is presented in Algorithm \ref{alg:dynamic}. It consists of two subroutines, which we call the \emph{greedy rule} and the \emph{update rule}.
The greedy rule of the \dgreedy{} uses the same greedy update as the \kgreedy{} does: At each step, find a point $v \in V$ that maximizes the marginal gain over maximum cost, and add $v$ to the current solution, if the resulting set is feasible in all knapsacks.
The update rule allows to handle possible changes to the weights, even when the greedy procedure has not terminated, without having to completely restart the algorithm. 

Following the notation of Algorithm \ref{alg:dynamic}, if new weights $W'_1, \dots, W'_k$ are given, then the \dgreedy{} iteratively removes points from the current solution, until the resulting set yields $\sigma \leq \min \{\chi(c, W), \chi (c, W') \}$ and $\sigma \subseteq \mathcal{V} \cap \mathcal{V}'$. This is motivated by the following facts:
\begin{enumerate}
    \item every set $U \leq \min \{\chi(c, W), \chi (c, W') \}$ is feasible in both the old and the new constraints;
    \item every set $U \leq \min \{\chi(c, W), \chi (c, W') \}$ such that $U \subseteq \mathcal{V} \cap \mathcal{V}'$ obtained with a greedy procedure yields the same approximation guarantee in both constraints;
    \item every set $U \subseteq \mathcal{V} \cap \mathcal{V}'$ is such that $c_i(e) \leq \lambda W'_i/k$ for all $i \in k$, for all $e \in U$.
\end{enumerate}
All three conditions are necessary to ensure that the approximation guarantee is maintained.

Note that the update rule in Algorithm \ref{alg:dynamic} does not backtrack the execution of the algorithm until the resulting solution is feasible in the new constraint, and then adds elements to the current solution. For instance, consider the following example, due to Roostapour et al. \cite{DBLP:journals/corr/abs-1811-07806}. We are given a set of $n + 1$ items $\{e_1, \dots, e_{n + 1}\}$ under a single knapsack $(c, W)$, with the cost function $c$ defined as
\[
c(e_i) =
\left \{
\begin{array}{ll}
1, & \mbox{if } 1 \leq i \leq n/2 \mbox{ or } i = n + 1;\\
2, & \mbox{otherwise};\\
\end{array}
\right .
\]
and $f$-values defined on the singleton as
\[
f(e_i) =
\left \{
\begin{array}{ll}
1/n, & \mbox{if } 1 \leq i \leq n/2;\\
1  , & \mbox{if } n/2 < i \leq n;\\
3,   & \mbox{if } i = n + 1;\\
\end{array}
\right .
\]
We define $f(U) = \sum_{e \in U} f(e)$, for all $U \subseteq \{e_1, \dots, e_{n + 1}\}$. Consider Algorithm $\ref{alg:dynamic}$ optimizing $f$ from scratch with $W = 2$. We only consider the case $\lambda = 1$, since we only consider a single knapsack constraint. Then both Algorithm \ref{alg:dynamic} and the \kgreedy{} return a set of the form $\{e_{n + 1}, e_j \}$ with $1 \leq j \leq n/2$. Suppose now that the weight dynamically changes to $W = 3$. Then backtracking the execution and adding points to the current solution would result into a solution of the form $\{e_{n + 1}, e_i, e_j\}$ with $1 \leq i,j \leq n/2$ with $f(\{e_{n + 1}, e_i, e_j\}) = 3 + 2/n$. However, in this case it holds $\min \{\chi(c, 2), \chi(c, 3)\} = 1$, since there exists a solution of cardinality $2$ that is not feasible in $(c, 2)$. Hence, Algorithm \ref{alg:dynamic} removes the point $e_j$ from the solution $\{e_{n + 1}, e_j \}$, before adding new elements to it. Afterwards, it adds a point $e_{j'}$ with $n/2 < {j'} \leq n$ to it, obtaining a solution such that $f(\{e_{n + 1}, e_{j'} \}) = 3 + 1 = 4$.\\
We remark that on this instance, the \kgreedy{} with a simple backtracking operator yields arbitrarily bad approximation guarantee, as discussed in \cite[Theroem 3]{DBLP:journals/corr/abs-1811-07806}. In contrast, Algorithm \ref{alg:dynamic} maintains the approximation guarantee on this instance (see Theorem \ref{theorem:dynamic}).

\section{APPROXIMATION GUARANTEES}
\label{sec:approx_ratios}

We prove that Algorithm \ref{alg:static} yields a strong approximation guarantee, when maximizing a submodular function under $k$ knapsack constraints in the static case. This part of the analysis does not consider dynamic weight updates. We use the notion of curvature as in Definition \ref{def:generalized_curvature}.
\begin{theorem}
\label{theorem:static}
Let $f$ be a submodular function with curvature $\alpha$, suppose that $k$ knapsacks are given. For all $\lambda \in [1, k]$, the \kgreedy{} is a $(1 - e^{ -1/ \lambda})/(3 \max(1, \alpha))$-approximation algorithm for Problem~\ref{problem:static}. Its run time is $\bigo{n^{\max(k/\lambda, 2)}}$.
\end{theorem}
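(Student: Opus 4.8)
The plan is to reduce the multi-knapsack analysis to a density-greedy argument on the ``small'' set $\mathcal{V}$, and to let the exhaustive search and the best singleton $v^*$ absorb everything else. First I would split the optimum as $\opt = o_\mathcal{V} \cup o_{\bar{\mathcal V}}$, where $o_\mathcal{V} = \opt \cap \mathcal{V}$ and $o_{\bar{\mathcal V}} = \opt \setminus \mathcal{V}$. Since the two parts are disjoint, submodularity gives $f(o_\mathcal{V}) + f(o_{\bar{\mathcal V}}) \ge f(\opt) + f(\emptyset) \ge f(\opt)$. Because each $c_i$ is linear and non-negative, $o_{\bar{\mathcal V}}$ is itself a feasible subset of $V \setminus \mathcal{V}$, so it is one of the candidates in the final $\argmax$; hence the returned set, of value $M := \max\{f(\sigma), f(v^*), f(U^*)\}$ (with $U^*$ the maximizer over feasible subsets of $V\setminus\mathcal{V}$), already satisfies $M \ge f(U^*) \ge f(o_{\bar{\mathcal V}})$. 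It therefore remains to control $f(o_\mathcal{V})$ through the greedy run on $\mathcal{V}$.

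The technical heart is a density-greedy bound showing
\[
f(\sigma) + f(v^*) \;\ge\; \frac{1 - e^{-1/\lambda}}{\max(1,\alpha)}\, f(o_\mathcal{V}).
\]
To prove it I would let $g_1, \dots, g_m$ be the elements the greedy rule appends to $\sigma$ and write $\sigma_t = \{g_1, \dots, g_t\}$. At each step the rule selects the available element of maximum ratio $f_{\sigma_{t-1}}(e)/\max_j c_j(e)$; comparing this choice against the still-available elements of $o_\mathcal{V}$ and averaging over their $\max_j c_j$-costs (which are controlled because $o_\mathcal{V}\subseteq\mathcal{V}$ and $o_\mathcal{V}$ is feasible in every knapsack) yields a per-step progress inequality of the form $f_{\sigma_{t-1}}(g_t) \ge \frac{\max_j c_j(g_t)}{B}\big(f(o_\mathcal{V})/\max(1,\alpha) - f(\sigma_{t-1})\big)$ for a suitable budget $B$. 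Telescoping this recursion over the elements consumed before the greedy stalls produces a factor $1 - e^{-(\text{consumed}/B)}$, and the threshold $\lambda W_j/k$ defining $\mathcal{V}$ is exactly what forces the consumed fraction to be at least $1/\lambda$: the greedy can only reject an element of $o_\mathcal{V}$ once some knapsack is full to within $\lambda W_j/k$ of its capacity. The single rejected element is a singleton whose value is at most $f(v^*)$ (by $f(e)\ge f_{\sigma}(e)$ and since $v^*$ is the best singleton), so adding $f(v^*)$ closes the gap.

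Given the displayed bound, the constants assemble cleanly. Writing each of the three candidates as at most $M$, we get $f(o_\mathcal{V}) \le \frac{\max(1,\alpha)}{1-e^{-1/\lambda}}(f(\sigma)+f(v^*)) \le \frac{2\max(1,\alpha)}{1-e^{-1/\lambda}}M$ and $f(o_{\bar{\mathcal V}}) \le M$, whence $f(\opt) \le \big(\frac{2\max(1,\alpha)}{1-e^{-1/\lambda}}+1\big)M \le \frac{3\max(1,\alpha)}{1-e^{-1/\lambda}}M$, using $1-e^{-1/\lambda} < 1 \le \max(1,\alpha)$. This is the claimed approximation ratio. For the run time, the greedy loop evaluates $f$ a quadratic number of times, giving $\bigo{n^2}$; the final exhaustive search ranges over feasible subsets of $V\setminus\mathcal{V}$, and since every element there consumes more than a $\lambda/k$-fraction of some budget, a feasibility argument bounds the cardinality of any such subset and hence the search cost by $\bigo{n^{k/\lambda}}$. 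The total is $\bigo{n^{\max(k/\lambda,2)}}$.

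The main obstacle is the density-greedy bound. Two points are delicate: (i) pinning down the budget normalization $B$ and the averaging over the several $\max_j c_j$-costs so that the consumed fraction is provably at least $1/\lambda$ and the exponent in $1-e^{-1/\lambda}$ emerges, rather than a weaker fraction degrading with $k$; and (ii) carrying the non-monotone terms through the recursion via Definition~\ref{def:generalized_curvature}, which is what replaces $f(o_\mathcal{V})$ by $f(o_\mathcal{V})/\max(1,\alpha)$ and keeps the per-step inequality valid when marginal values shrink in the presence of $\sigma_{t-1}$. Everything else --- the submodular split, the feasibility of $o_{\bar{\mathcal V}}$, and the final arithmetic --- is routine.
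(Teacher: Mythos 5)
Your proposal follows essentially the same route as the paper's own proof: the same split of $\opt$ into $\opt\cap\mathcal{V}$ and $\opt\setminus\mathcal{V}$ handled by submodularity and the final exhaustive search, the same density-greedy per-step inequality with curvature entering as $\max(1,\alpha)$, the same telescoping/recursion yielding the factor $1-e^{-1/\lambda}$ once the first infeasible element (absorbed by the best singleton $v^*$) is accounted for, and the same factor-$3$ assembly and run-time accounting. The two delicate points you flag are precisely the steps the paper dispatches with its two auxiliary lemmas --- the curvature-based marginal-value bound and the LP recursion with budget normalization $\lambda W$ obtained from $\max_j c_j(e)\le \lambda W_j/k$ on $\mathcal{V}$ together with feasibility of $\opt$ --- so your plan coincides with the published argument.
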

This proof is based on the work of Khuller et al. \cite{DBLP:journals/ipl/KhullerMN99}. Note that if the function $f$ is monotone, then the approximation guarantee given in Theorem~\ref{theorem:static} matches well-known results \cite{DBLP:journals/ipl/KhullerMN99}. We remark that non-monotone functions with bounded curvature are not uncommon in practise. For instance, all cut functions of directed graphs are non-monotone, submodular and have curvature $\alpha \leq 2$, as discussed in \cite{DBLP:journals/corr/abs-1811-05351}.

We perform the run time analysis for the \kgreedy{} in dynamic settings, in which weights $\{ W_i \}_i$ change over time.
\begin{theorem}
\label{theorem:dynamic}
Consider Algorithm~\ref{alg:dynamic} optimizing a submodular function with curvature $\alpha > 0$ and knapsack constraints $(c, W )$. Suppose that at some point during the optimization process new weights $W'_i$ are given. Let $\sigma$ be the current solution before the weights update, and let $\sigma_t \subseteq \sigma$ consist of the first $t$ points added to $\sigma$. Furthermore, let $t^*$ be the largest index such that $\sigma_{t^*} \subseteq \mathcal{V} \cap \mathcal{V}'$, with $\mathcal{V}, \mathcal{V}'$ as in Algorithm~\ref{alg:dynamic}, and define 
$\chi = \min \left  \{ \chi(c, W), \chi (c, W'), t^* \right \}.$
Then after additional $\bigo{n(n - \chi)}$ run time the \kgreedy{} finds a $(1 - e^{ -1/ \lambda})/(3 \max(1, \alpha))$-approximate optimal solution in the new constraints, for a fixed parameter $\lambda \in [1, k]$.
\end{theorem}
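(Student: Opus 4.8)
The plan is to reduce the dynamic statement to the static guarantee of Theorem~\ref{theorem:static} by showing that, after the update rule executes, the \dgreedy{} is in exactly the configuration it would occupy if it had been run from scratch on the new constraints up to the retained prefix. First I would analyse what the update rule does to the current solution $\sigma$. The inner \texttt{while} loop strips points off the tail of $\sigma$ until simultaneously $\absl{\sigma} \leq \min\{\chi(c,W),\chi(c,W')\}$ and $\sigma \subseteq \mathcal{V}\cap\mathcal{V}'$. Since points are removed strictly in reverse order of insertion, the surviving prefix is precisely $\sigma_{t'}$ for some $t'$, and by the definition of $t^*$ (the largest index with $\sigma_{t^*}\subseteq\mathcal{V}\cap\mathcal{V}'$) together with the cardinality cut-off, the retained prefix has length exactly $\chi = \min\{\chi(c,W),\chi(c,W'),t^*\}$. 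The first task is to verify this identification carefully, i.e.\ that the loop terminates exactly at $\sigma_\chi$ and not earlier or later.

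Next I would argue feasibility and the structural invariants the static analysis needs. Using the three numbered facts stated just before the theorem: fact~(1) gives that $\sigma_\chi$, having cardinality at most $\min\{\chi(c,W),\chi(c,W')\}$, is feasible in both old and new constraints; fact~(3) gives that every retained element $e$ satisfies $c_i(e)\leq \lambda W_i'/k$, so that $\sigma_\chi\subseteq \mathcal{V}'$ and the greedy selections already made are legitimate selections over the new small-cost set; and fact~(2) is the key one — it asserts that a prefix obtained greedily and lying in $\mathcal{V}\cap\mathcal{V}'$ enjoys the \emph{same} approximation guarantee under either constraint set. After the assignment $\mathcal{V}\gets\mathcal{V}'$ and $S\gets\mathcal{V}\setminus\sigma$, the algorithm resumes the same greedy rule on the ground set $\mathcal{V}'$ with partial solution $\sigma_\chi$. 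I would then invoke the static analysis verbatim on the new instance: the greedy sequence continuing from a valid greedy prefix over $\mathcal{V}'$, combined with the deterministic search over feasible subsets of $V\setminus\mathcal{V}'$ and the singleton $v^*$, reproduces the hypotheses of Theorem~\ref{theorem:static}, yielding the $(1-e^{-1/\lambda})/(3\max(1,\alpha))$ ratio for the new constraints.

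For the run time I would count the two phases separately. The update loop removes at most $\absl{\sigma}-\chi \leq n-\chi$ points, each removal costing $\bigo{1}$ beyond possibly re-checking membership, and rebuilding $S=\mathcal{V}'\setminus\sigma$ and testing $\mathcal{V}'$ costs $\bigo{n}$; the subsequent greedy completion from a prefix of length $\chi$ on a ground set of size at most $n$ makes at most $n-\chi$ further greedy steps, each scanning $\bigo{n}$ candidates for the maximiser of $f_\sigma(e)/\max_j c_j(e)$. Summing gives $\bigo{n(n-\chi)}$ additional function evaluations, as claimed.

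The main obstacle I anticipate is fact~(2): showing that the greedily built prefix carries the \emph{same} approximation bound under the new weights. The static proof of Theorem~\ref{theorem:static} follows the Khuller--Moss--Naor telescoping argument, in which each greedy increment is charged against the optimum's marginal value relative to the bound $\lambda W_j/k$ governing $\mathcal{V}$. Because $\sigma_\chi\subseteq\mathcal{V}\cap\mathcal{V}'$, the same per-step cost-normalised inequalities hold with $W_j$ replaced by $W_j'$, but one must check that the curvature-based correction and the $(1-e^{-1/\lambda})$ factor, which depend on the \emph{length} of the greedy sequence and the cost ratios within $\mathcal{V}'$, are not degraded by having started from an inherited prefix rather than from scratch. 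Concretely, I would need to confirm that the optimum of the \emph{new} problem restricted to $\mathcal{V}'$ can still be decomposed so that each of its elements is charged against the greedy gains, which relies precisely on the cardinality bound $\chi\leq\chi(c,W')$ guaranteeing feasibility of the relevant comparison sets — this is the step where the three conditions must be shown to be jointly, and not merely individually, sufficient.
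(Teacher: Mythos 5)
You take the same route as the paper: show that once the update rule has fired, the \dgreedy{} sits in exactly the state that a from-scratch run of the \kgreedy{} under the new constraints would reach, and then quote Theorem~\ref{theorem:static}. Your identification of the surviving prefix as $\sigma_\chi$ (points are removed in reverse insertion order, prefixes are nested, so the loop halts at length $\min\{\chi(c,W),\chi(c,W'),t^*\}$) and your run-time accounting (at most $n-\chi$ remaining greedy steps at $\bigo{n}$ oracle calls each, plus $\bigo{n}$ for rebuilding $S$) both agree with the paper's proof.

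The gap is that the one claim carrying all the weight --- your fact~(2), that the inherited prefix behaves like a genuine greedy prefix for the new instance --- is exactly what you leave unproven; you flag it as the main obstacle and sketch a fallback, namely re-running the Khuller--Moss--Naor charging argument starting from an inherited prefix and checking that nothing degrades. That fallback points the wrong way: the static analysis charges greedy increments against $f(\opt \cap V_1)$, where both $\opt$ and $V_1 = \mathcal{V}$ are objects of the \emph{new} instance, so ``checking it is not degraded'' amounts to redoing the entire static proof. The paper instead proves the stronger statement you announce in your first paragraph, by induction on the number of weight updates: the selection ratio $f_{\sigma}(e)/\max_j c_j(e)$ does not depend on the weights at all, and since the $c_j$ are linear, every set $T$ with $\absl{T}\leq\chi$ satisfies $c_j(T)\leq\sum_{e\in T}\max_j c_j(e)\leq W_j$ and likewise $c_j(T)\leq W'_j$; hence up to cardinality $\chi$ the feasibility checks never bind in either instance, the old and new greedy paths select identical points, and for $t>\chi$ the algorithm is already natively executing the greedy rule under the new constraints. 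This is also where your closing remark about joint sufficiency is resolved: the cardinality cap yields path equality, while $\sigma\subseteq\mathcal{V}\cap\mathcal{V}'$ guarantees the retained points are legal candidates in the new ground set $\mathcal{V}'$. With that step-by-step equivalence in hand, Theorem~\ref{theorem:static} applies verbatim and no degradation analysis of the telescoping or curvature bounds is needed; without it, your plan is a reduction with a hole precisely where the theorem's difficulty lives.
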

Intuitively, this proof shows that, given two sets of feasible solutions $\mathcal{V}, \mathcal{V}'$, the \kgreedy{} follows the same paths on both problem instances, for all solutions with size up to $\chi$. Note that Theorem \ref{theorem:static} yields the same theoretical approximation guarantee as Theorem \ref{theorem:dynamic}. Hence, if dynamic updates occur at a slow pace, it is possible to obtain identical results by restarting $\kgreedy{}$ every time a constraint update occurs. However, as we show in Section \ref{sec:experiments_dynamic}, there is significant advantage in using the \dgreedy{} in settings when frequent noisy constraints updates occur.

\section{APPLICATIONS}
\label{sec:experimental_framework}
In this section we present a high-level overview of two possible applications for Problem \ref{problem:static}. We describe experimental frameworks and implementations for these applications in Sections \ref{sec:experiments_static}-\ref{sec:experiments_dynamic}.

\paragraph{Video Summarization.} Determinantal Point Process (DPP), \cite{macchi_1975}, is a probabilistic model, the probability distribution function of which can be characterized as the determinant of a matrix. More formally, consider a sample space $V = [n]$, and let $L$ be a positive semidefinite matrix. We say that $L$ defines a DPP on $V$, if the probability of an event $S \subseteq V$ is given by
\begin{equation*}
\label{eq:DPP}
\mathcal{P}_L(S) = \frac{\det(L_S)}{\det(L+I)},
\end{equation*}
where $L_S = (L_{i,j})_{i,j \in S}$ is the submatrix of $L$ indexed by the elements in $S$, and $I$ is the $n\times n$ identity matrix. For a survey on DPPs and their applications see \cite{DBLP:journals/ftml/KuleszaT12}.

We model this framework with a matrix $L$ that describes similarities between pairs of frames. Intuitively, if $L$ describes the similarity between two frames, then the DPP prefers diversity.\\
In this setting, we search for a set of features $S \subseteq V$ such that $\mathcal{P}(S)$ is maximal, among sets of feasible solutions defined in terms of a knapsack constraint. Since $L$ is positive semidefinite, the function $\log \det L_S$ is submodular \cite{DBLP:journals/ftml/KuleszaT12}. 
\paragraph{Sensor Placement.} The maximum entropy sampling problem consists of choosing the most informative subset of random variables subject to side constraints. In this work, we study the problem of finding the most informative set among given Gaussian time series. 

Given a sample covariance matrix $\Sigma$ of the time series corresponding to measurements, the entropy of a subset of sensors is then given by the formula
\[
f(S) = \frac{1 + \ln (2\pi)}{2} \absl{S} + \frac{1}{2} \ln \det(\Sigma_S)
\]
for any indexing set $S \subseteq \{0, 1\}^n$ on the variation series, where $\det(\Sigma_S)$ returns the determinant of the sub-matrix of $\Sigma$ indexed by $S$. It is well-known that the function $f$ is non-monotone and submodular. Its curvature is bounded as $\alpha \leq 1 - 1/\mu$, where $\mu$ is the largest eigenvalue of $\Sigma$~\cite{DBLP:journals/mor/SviridenkoVW17,DBLP:journals/jmlr/KrauseSG08,DBLP:journals/corr/abs-1811-05351}.\\
We consider the problem of maximizing the entropy $f$ under a \emph{partition matroid constraint}. This additional side constraint requires upper-bounds on the number of sensors that can be chosen in given geographical areas.

\section{STATIC EXPERIMENTS}
\label{sec:experiments_static}
\begin{table*}[t]
\begin{tabular}{c c c c c c c c c}
& & \multicolumn{7}{c}{\textbf{run time for each video clip}}  \\ \cline{3-9}
\textbf{algorithm} & \textbf{parameter} & \multicolumn{1}{|c|}{video (1)} & \multicolumn{1}{c|}{video (2)} & \multicolumn{1}{c|}{video (3)} & \multicolumn{1}{c|}{video (4)} & \multicolumn{1}{c|}{video (5)} & \multicolumn{1}{c|}{video (6)} & \multicolumn{1}{c|}{video (7)} \\ \hline
\multicolumn{1}{|c|}{\fantom{}} & $\varepsilon = 0.001$  & \multicolumn{1}{|c|}{4085317} & \multicolumn{1}{c|}{3529230} & \multicolumn{1}{c|}{3813637} & \multicolumn{1}{c|}{2986719} & \multicolumn{1}{c|}{3368901} & \multicolumn{1}{c|}{3442329} & \multicolumn{1}{c|}{3082814} \\ 
\multicolumn{1}{|c|}{\fantom{}} & $\varepsilon = 0.01$  & \multicolumn{1}{|c|}{406960} & \multicolumn{1}{c|}{351321} & \multicolumn{1}{c|}{382342} & \multicolumn{1}{c|}{299444} & \multicolumn{1}{c|}{336670} & \multicolumn{1}{c|}{344619} & \multicolumn{1}{c|}{307726} \\ 
\multicolumn{1}{|c|}{\fantom{}} & $\varepsilon = 0.1$ & \multicolumn{1}{|c|}{41008} & \multicolumn{1}{c|}{34309} & \multicolumn{1}{c|}{37232} & \multicolumn{1}{c|}{29249} & \multicolumn{1}{c|}{33343} & \multicolumn{1}{c|}{33146} & \multicolumn{1}{c|}{30235} \\ 
\multicolumn{1}{|c|}{\kgreedy{}} & $\lambda = k $ & \multicolumn{1}{|c|}{2895} & \multicolumn{1}{c|}{2895} & \multicolumn{1}{c|}{2895} & \multicolumn{1}{c|}{2709} & \multicolumn{1}{c|}{2895} & \multicolumn{1}{c|}{2709} & \multicolumn{1}{c|}{2522} \\ \hline
\multicolumn{9}{c}{} \\ \cline{3-9}
\textbf{algorithm} & \textbf{parameter} & \multicolumn{1}{|c|}{video (8)} & \multicolumn{1}{c|}{video (9)} & \multicolumn{1}{c|}{video (10)} & \multicolumn{1}{c|}{video (11)} & \multicolumn{1}{c|}{video (12)} & \multicolumn{1}{c|}{video (13)} & \multicolumn{1}{c|}{video (14)} \\ \hline
\multicolumn{1}{|c|}{\fantom{}} & $\varepsilon = 0.001$  & \multicolumn{1}{|c|}{3171467} & \multicolumn{1}{c|}{3781141} & \multicolumn{1}{c|}{3121379} & \multicolumn{1}{c|}{3673776} & \multicolumn{1}{c|}{3603787} & \multicolumn{1}{c|}{3055122} & \multicolumn{1}{c|}{4119203} \\ 
\multicolumn{1}{|c|}{\fantom{}} & $\varepsilon = 0.01$  & \multicolumn{1}{|c|}{320738} & \multicolumn{1}{c|}{379420} & \multicolumn{1}{c|}{313975} & \multicolumn{1}{c|}{368617} & \multicolumn{1}{c|}{360265} & \multicolumn{1}{c|}{ 307399} & \multicolumn{1}{c|}{412247} \\ 
\multicolumn{1}{|c|}{\fantom{}} & $\varepsilon = 0.1$ & \multicolumn{1}{|c|}{30605} & \multicolumn{1}{c|}{36608} & \multicolumn{1}{c|}{30809} & \multicolumn{1}{c|}{36242} & \multicolumn{1}{c|}{35682} & \multicolumn{1}{c|}{30223} & \multicolumn{1}{c|}{40543} \\ 
\multicolumn{1}{|c|}{\kgreedy{}} & $\lambda = k $ & \multicolumn{1}{|c|}{2709} & \multicolumn{1}{c|}{3080} & \multicolumn{1}{c|}{2895} & \multicolumn{1}{c|}{2895} & \multicolumn{1}{c|}{3080} & \multicolumn{1}{c|}{2709} & \multicolumn{1}{c|}{3080} \\ \hline
\multicolumn{9}{c}{} \\ \cline{3-8}
\textbf{algorithm} & \textbf{parameter} & \multicolumn{1}{|c|}{video (15)} & \multicolumn{1}{c|}{video (16)} & \multicolumn{1}{c|}{video (17)} & \multicolumn{1}{c|}{video (18)} & \multicolumn{1}{c|}{video (19)} & \multicolumn{1}{c|}{video (20)} & \\ \cline{1-8}
\multicolumn{1}{|c|}{\fantom{}} & $\varepsilon = 0.001$  & \multicolumn{1}{|c|}{3727241} & \multicolumn{1}{c|}{3321987} & \multicolumn{1}{c|}{3429387} & \multicolumn{1}{c|}{3555884} & \multicolumn{1}{c|}{3375296 } & \multicolumn{1}{c|}{3431653} &  \\ 
\multicolumn{1}{|c|}{\fantom{}} & $\varepsilon = 0.01$  & \multicolumn{1}{|c|}{374322} & \multicolumn{1}{c|}{330994} & \multicolumn{1}{c|}{345333} & \multicolumn{1}{c|}{354694} & \multicolumn{1}{c|}{336718} & \multicolumn{1}{c|}{343419} & \\ 
\multicolumn{1}{|c|}{\fantom{}} & $\varepsilon = 0.1$ & \multicolumn{1}{|c|}{36073} & \multicolumn{1}{c|}{32377} & \multicolumn{1}{c|}{33357} & \multicolumn{1}{c|}{34741} & \multicolumn{1}{c|}{32370} & \multicolumn{1}{c|}{32187} & \\ 
\multicolumn{1}{|c|}{\kgreedy{}} & $\lambda = k $ & \multicolumn{1}{|c|}{2895} & \multicolumn{1}{c|}{2895} & \multicolumn{1}{c|}{2895} & \multicolumn{1}{c|}{2895} & \multicolumn{1}{c|}{2709} & \multicolumn{1}{c|}{2709} & \\ \cline{1-8}
\multicolumn{9}{c}{}  \\
\end{tabular}
\begin{tabular}{c c c c c c c c c}\label{tbl:static_solutionquality}
& & \multicolumn{7}{c}{\textbf{solution quality for each video clip}}  \\ \cline{3-9}
\textbf{algorithm} & \textbf{parameter} & \multicolumn{1}{|c|}{video (1)} & \multicolumn{1}{c|}{video (2)} & \multicolumn{1}{c|}{video (3)} & \multicolumn{1}{c|}{video (4)} & \multicolumn{1}{c|}{video (5)} & \multicolumn{1}{c|}{video (6)} & \multicolumn{1}{c|}{video (7)} \\ \hline
\multicolumn{1}{|c|}{\fantom{}} & $\varepsilon = 0.001$  & \multicolumn{1}{|c|}{19.3818} & \multicolumn{1}{c|}{15.6143} & \multicolumn{1}{c|}{15.4285} & \multicolumn{1}{c|}{10.6228} & \multicolumn{1}{c|}{13.2393} & \multicolumn{1}{c|}{14.0438} & \multicolumn{1}{c|}{9.4999} \\ 
\multicolumn{1}{|c|}{\fantom{}} & $\varepsilon = 0.01$  & \multicolumn{1}{|c|}{19.3818} & \multicolumn{1}{c|}{15.6143} & \multicolumn{1}{c|}{15.4285} & \multicolumn{1}{c|}{10.6228} & \multicolumn{1}{c|}{13.2393} & \multicolumn{1}{c|}{12.9851} & \multicolumn{1}{c|}{9.4999} \\ 
\multicolumn{1}{|c|}{\fantom{}} & $\varepsilon = 0.1$ & \multicolumn{1}{|c|}{16.9083} & \multicolumn{1}{c|}{13.9868} & \multicolumn{1}{c|}{13.9942} & \multicolumn{1}{c|}{9.1811} & \multicolumn{1}{c|}{13.2393} & \multicolumn{1}{c|}{12.9851} & \multicolumn{1}{c|}{9.4999} \\ 
\multicolumn{1}{|c|}{\kgreedy{}} & $\lambda = k $ & \multicolumn{1}{|c|}{23.9323} & \multicolumn{1}{c|}{21.8122} & \multicolumn{1}{c|}{22.5406} & \multicolumn{1}{c|}{15.0203} & \multicolumn{1}{c|}{19.4932} & \multicolumn{1}{c|}{18.6267} & \multicolumn{1}{c|}{15.5678} \\ \hline
\multicolumn{9}{c}{} \\ \cline{3-9}
\textbf{algorithm} & \textbf{parameter} & \multicolumn{1}{|c|}{video (8)} & \multicolumn{1}{c|}{video (9)} & \multicolumn{1}{c|}{video (10)} & \multicolumn{1}{c|}{video (11)} & \multicolumn{1}{c|}{video (12)} & \multicolumn{1}{c|}{video (13)} & \multicolumn{1}{c|}{video (14)} \\ \hline
\multicolumn{1}{|c|}{\fantom{}} & $\varepsilon = 0.001$  & \multicolumn{1}{|c|}{11.0898} & \multicolumn{1}{c|}{16.2864} & \multicolumn{1}{c|}{10.7798} & \multicolumn{1}{c|}{15.9894} & \multicolumn{1}{c|}{15.5939} & \multicolumn{1}{c|}{12.5897} & \multicolumn{1}{c|}{18.7495} \\ 
\multicolumn{1}{|c|}{\fantom{}} & $\varepsilon = 0.01$  & \multicolumn{1}{|c|}{11.0898} & \multicolumn{1}{c|}{16.2864} & \multicolumn{1}{c|}{10.7798} & \multicolumn{1}{c|}{15.9894} & \multicolumn{1}{c|}{15.5939} & \multicolumn{1}{c|}{12.5897} & \multicolumn{1}{c|}{18.7495} \\ 
\multicolumn{1}{|c|}{\fantom{}} & $\varepsilon = 0.1$ & \multicolumn{1}{|c|}{9.5612} & \multicolumn{1}{c|}{16.2864} & \multicolumn{1}{c|}{10.7798} & \multicolumn{1}{c|}{14.4139} & \multicolumn{1}{c|}{14.1122} & \multicolumn{1}{c|}{9.5909} & \multicolumn{1}{c|}{17.3095} \\ 
\multicolumn{1}{|c|}{\kgreedy{}} & $\lambda = k $ & \multicolumn{1}{|c|}{18.5727} & \multicolumn{1}{c|}{23.9619} & \multicolumn{1}{c|}{17.6612} & \multicolumn{1}{c|}{23.2229} & \multicolumn{1}{c|}{20.8876} & \multicolumn{1}{c|}{18.1164} & \multicolumn{1}{c|}{25.1342} \\ \hline
\multicolumn{9}{c}{} \\ \cline{3-8}
\textbf{algorithm} & \textbf{parameter} & \multicolumn{1}{|c|}{video (15)} & \multicolumn{1}{c|}{video (16)} & \multicolumn{1}{c|}{video (17)} & \multicolumn{1}{c|}{video (18)} & \multicolumn{1}{c|}{video (19)} & \multicolumn{1}{c|}{video (20)} & \\ \cline{1-8}
\multicolumn{1}{|c|}{\fantom{}} & $\varepsilon = 0.001$  & \multicolumn{1}{|c|}{16.3391} & \multicolumn{1}{c|}{11.8452} & \multicolumn{1}{c|}{13.6084} & \multicolumn{1}{c|}{16.9964} & \multicolumn{1}{c|}{13.0314} & \multicolumn{1}{c|}{13.0558} &  \\ 
\multicolumn{1}{|c|}{\fantom{}} & $\varepsilon = 0.01$  & \multicolumn{1}{|c|}{16.3391} & \multicolumn{1}{c|}{11.8452} & \multicolumn{1}{c|}{13.6084} & \multicolumn{1}{c|}{16.9964} & \multicolumn{1}{c|}{13.0314} & \multicolumn{1}{c|}{13.0558} &  \\ 
\multicolumn{1}{|c|}{\fantom{}} & $\varepsilon = 0.1$ & \multicolumn{1}{|c|}{14.7544} & \multicolumn{1}{c|}{11.8452} & \multicolumn{1}{c|}{12.0878} & \multicolumn{1}{c|}{14.0999} & \multicolumn{1}{c|}{11.5619} & \multicolumn{1}{c|}{11.5385} & \\ 
\multicolumn{1}{|c|}{\kgreedy{}} & $\lambda = k $ & \multicolumn{1}{|c|}{23.8740} & \multicolumn{1}{c|}{19.4916} & \multicolumn{1}{c|}{19.9461} & \multicolumn{1}{c|}{22.2884} & \multicolumn{1}{c|}{18.6588} & \multicolumn{1}{c|}{19.0673} & \\ \cline{1-8}
\end{tabular}
\caption{We consider $20$ movies from the Frames Labeled In Cinema dataset \cite{FLICWeb}. For each movie, we select a representative set of frames, by maximizing a submodular function under additional knapsack constraints, as described in Section \ref{sec:experiments_static}. We run the \kgreedy{} and the \fantomalg{} algorithm \cite{MirzasoleimanBK16} with various parameter choices, until no remaining point in the search space yields improvement on the fitness value, without violating side constraints. We observe that in all cases the \kgreedy{} yields better run time and solution quality than the \fantomalg{}.}\label{tbl:static_runtime}
\end{table*}
The aim of these experiments is to show that the \kgreedy{} yields better performance in comparison with \fantomalg{} \cite{MirzasoleimanBK16}, which is a popular algorithm for non-monotone submodular objectives under complex sets of constraints. We consider video summarization tasks as in Section \ref{sec:experimental_framework}.

Let $L$ be the matrix describing similarities between pairs of frames, as in Section \ref{sec:experimental_framework}. Following \cite{DBLP:conf/nips/GongCGS14}, we parametrize $L$ as follows. Given a set of frames, let $\bm{f}_i$ be the feature vector of the $i$-th frame. This vector encodes the contextual information about frame $i$ and its representativeness of other items. Then the matrix $L$ can be paramterized as $L_{i,j} = \bm{z}_i^T W^T W \bm{z}_j$,
with $\bm{z}_i = \tanh(U \bm{f}_i)$ being a hidden representation of $\bm{f}_i$, and $U, W$ parameters. We use a single-layer neural network to train the parameters $U, W$. We consider $20$ movies from the Frames Labeled In Cinema dataset \cite{FLICWeb}. Each movie has $200$ frames and $7$ generated ground summaries consisting of $15$ frames each.

We select a representative set of frames, by maximizing the function $\log \det L$ under additional quality feature constraints, viewed as multiple knapsacks. Hence, this task consists of maximizing a non-monotone submodular function under multiple knapsack constraints.\\ 
We run the \kgreedy{} and \fantomalg{} algorithms on each instance, until no remaining point in the search space yields improvement on the fitness value, without violating side constraints. We then compare the resulting run time and empirical approximation ratio. Since \fantomalg{} depends on a parameter $\varepsilon$ \cite{MirzasoleimanBK16}, we perform three sets of experiments for $\varepsilon = 0.1$, $\varepsilon =0.01$, and $\varepsilon =0.001$. The parameter $\lambda$ for the \kgreedy{} is always set to $\lambda = k$. We have no indications that a lower $\lambda$ yields improved solution quality on this set of instances.

Results for the run time and approximation guarantee are displayed in Table \ref{tbl:static_runtime}. We clearly see that the $\kgreedy{}$ outperforms $\fantom{}$ in terms of solution quality. Furthermore, the run time of $\fantom{}$ is orders of magnitude worse than that of our $\kgreedy{}$. This is probably due to the fact that the $\fantom{}$ requires a very low density threshold to get to a good solution on these instances. The code for this set of experiments is available upon request.

\section{DYNAMIC EXPERIMENTS}
\label{sec:experiments_dynamic}
The aim of these experiments is to show that, when constraints quickly change dynamically, the robust \dgreedy{} significantly outperforms the $\kgreedy{}$ with a restart policy, that re-sets the optimization process each time new weights are given. To this end, we simulate a setting where updates change dynamically, by introducing controlled posterior noise on the weights. At each update, we run the $\kgreedy{}$ from scratch, and let the \dgreedy{} continue without a restart policy. We consider two set of dynamic experiments.

\paragraph{The Maximum Entropy Sampling Problem}
\begin{table*}[t]
\caption{For the respective set of experiments, Maximum Entropy Sampling (left) and Video Summarization (right), the mean and standard deviation of the solution quality over time within one run of the respective algorithms for different update frequencies [$\tau$] and different dynamic update standard deviations [$\sigma$]. With $X^{(+)}$ we denote that the algorithm labelled $X$ significantly outperformed the other one.}
\begin{tabular}{c c c c c c c}\cline{3-6}\label{tbl:greedy_matroid}\label{tbl:greedy_knapsack}
                           &                            & \multicolumn{2}{|c|}{\kgreedy{} (1)} & \multicolumn{2}{|c|}{\dgreedy{} (2)} &  \\ \hline
\multicolumn{1}{|c}{$\tau$} & \multicolumn{1}{c|}{$\sigma$}              & mean    & \multicolumn{1}{c|}{sd}    & mean    & \multicolumn{1}{c|}{sd}     &       \multicolumn{1}{c|}{stat}                   \\ \hline
\multicolumn{1}{|c}{10K}                    & \multicolumn{1}{c|}{0.075} & 107.4085 & \multicolumn{1}{r|}{1.35}  & $\mathbf{130.5897}$   & \multicolumn{1}{c|}{12.28}                   & \multicolumn{1}{c|}{$2^{(+)}$}           \\
\multicolumn{1}{|c}{20K}                        & \multicolumn{1}{c|}{0.075} & 195.0352 & \multicolumn{1}{r|}{2.99}  & $\mathbf{213.3886}$   & \multicolumn{1}{c|}{11.15}                   & \multicolumn{1}{c|}{$2^{(+)}$}                \\
\multicolumn{1}{|c}{30K}                        & \multicolumn{1}{c|}{0.075} & 263.8143 & \multicolumn{1}{r|}{4.80}  & $\mathbf{279.7259}$   & \multicolumn{1}{c|}{11.38}                   & \multicolumn{1}{c|}{$2^{(+)}$}                \\
\multicolumn{1}{|c}{40K}                        & \multicolumn{1}{c|}{0.075} & 319.6090 & \multicolumn{1}{r|}{6.78}  & $\mathbf{329.8283}$   & \multicolumn{1}{c|}{10.80}                   & \multicolumn{1}{c|}{$2^{(+)}$}                \\
\multicolumn{1}{|c}{50K}                        & \multicolumn{1}{c|}{0.075} & 351.6649 & \multicolumn{1}{r|}{8.47}  & $\mathbf{353.6060}$   & \multicolumn{1}{c|}{9.19}                    & \multicolumn{1}{c|}{$2^{(+)}$}                \\ \hline
\multicolumn{1}{|c}{10K}                        & \multicolumn{1}{c|}{0.05}  & 107.6904 & \multicolumn{1}{r|}{0.73}  & $\mathbf{132.7425}$   & \multicolumn{1}{c|}{8.79}                    & \multicolumn{1}{c|}{$2^{(+)}$}                \\
\multicolumn{1}{|c}{20K}                        & \multicolumn{1}{c|}{0.05}  & 195.5784 & \multicolumn{1}{r|}{1.82}  & $\mathbf{216.0143}$   & \multicolumn{1}{c|}{8.00}                    & \multicolumn{1}{c|}{$2^{(+)}$}                \\
\multicolumn{1}{|c}{30K}                        & \multicolumn{1}{c|}{0.05}  & 264.6016 & \multicolumn{1}{r|}{3.13}  & $\mathbf{282.2722}$   & \multicolumn{1}{c|}{7.91}                    & \multicolumn{1}{c|}{$2^{(+)}$}                \\
\multicolumn{1}{|c}{40K}                        & \multicolumn{1}{c|}{0.05}  & 320.6179 & \multicolumn{1}{r|}{4.66}  & $\mathbf{332.0804}$   & \multicolumn{1}{c|}{7.57}                    & \multicolumn{1}{c|}{$2^{(+)}$}                \\
\multicolumn{1}{|c}{50K}                        & \multicolumn{1}{c|}{0.05}  & 352.8980 & \multicolumn{1}{r|}{6.10}  & $\mathbf{355.0595}$   & \multicolumn{1}{c|}{6.64}                    & \multicolumn{1}{c|}{$2^{(+)}$}                \\ \hline
\multicolumn{1}{|c}{10K}                        & \multicolumn{1}{c|}{0.10}  & 107.2164 & \multicolumn{1}{r|}{1.56}  & $\mathbf{124.4393}$   & \multicolumn{1}{c|}{14.81}                   & \multicolumn{1}{c|}{$2^{(+)}$}                \\
\multicolumn{1}{|c}{20K}                        & \multicolumn{1}{c|}{0.10}  & 194.4608 & \multicolumn{1}{r|}{3.43}  & $\mathbf{208.3658}$   & \multicolumn{1}{c|}{13.35}                   & \multicolumn{1}{c|}{$2^{(+)}$}                \\
\multicolumn{1}{|c}{30K}                        & \multicolumn{1}{c|}{0.10}  & 262.7148 & \multicolumn{1}{r|}{5.57}  & $\mathbf{274.8722}$   & \multicolumn{1}{c|}{13.64}                   & \multicolumn{1}{c|}{$2^{(+)}$}                \\
\multicolumn{1}{|c}{40K}                        & \multicolumn{1}{c|}{0.10}  & 317.9336 & \multicolumn{1}{r|}{7.96}  & $\mathbf{325.7493}$   & \multicolumn{1}{c|}{12.93}                   & \multicolumn{1}{c|}{$2^{(+)}$}                \\
\multicolumn{1}{|c}{50K}                        & \multicolumn{1}{c|}{0.10}  & 349.5698 & \multicolumn{1}{r|}{10.04} & $\mathbf{351.1627}$   & \multicolumn{1}{c|}{10.91}                   & \multicolumn{1}{c|}{$2^{(+)}$}                \\ \hline
\end{tabular}
\hspace{.2cm}
\begin{tabular}{c c c c c c c}\cline{3-6}
                           &                            & \multicolumn{2}{|c|}{\kgreedy{} (1)} & \multicolumn{2}{c|}{\dgreedy{} (2)} &  \\ \hline 
\multicolumn{1}{|c}{$\tau$} & $\sigma$                   & \multicolumn{1}{|c}{mean}    & \multicolumn{1}{c|}{sd}    & mean    & \multicolumn{1}{c|}{sd}     &   \multicolumn{1}{c|}{stat}                       \\ \hline
\multicolumn{1}{|c}{10K}    & \multicolumn{1}{c|}{0.075} & \multicolumn{1}{r}{36.550}  & \multicolumn{1}{r|}{2.1E-14} & $\mathbf{264.256} $  & \multicolumn{1}{c|}{69.21}                   & \multicolumn{1}{c|}{$2^{(+)}$}                \\
\multicolumn{1}{|c}{20K}    & \multicolumn{1}{c|}{0.075} & \multicolumn{1}{r}{69.760}  & \multicolumn{1}{r|}{7.2E-14} & $\mathbf{269.639} $  & \multicolumn{1}{c|}{57.77}                   & \multicolumn{1}{c|}{$2^{(+)}$}                \\
\multicolumn{1}{|c}{30K}    & \multicolumn{1}{c|}{0.075} & 102.969                     & \multicolumn{1}{r|}{4.3E-14} & $\mathbf{271.531}$   & \multicolumn{1}{c|}{53.20}                   & \multicolumn{1}{c|}{$2^{(+)}$}                \\
\multicolumn{1}{|c}{40K}    & \multicolumn{1}{c|}{0.075} & 136.956                     & \multicolumn{1}{r|}{8.6E-14} & $\mathbf{272.210}$   & \multicolumn{1}{c|}{51.18}                   & \multicolumn{1}{c|}{$2^{(+)}$}                \\
\multicolumn{1}{|c}{50K}    & \multicolumn{1}{c|}{0.075} & 174.657                     & \multicolumn{1}{r|}{1.30}     & $\mathbf{272.968} $  & \multicolumn{1}{c|}{49.38}                   & \multicolumn{1}{c|}{$2^{(+)}$}                \\ \hline
\multicolumn{1}{|c}{10K}    & \multicolumn{1}{c|}{0.05}  & \multicolumn{1}{r}{36.55}  & \multicolumn{1}{r|}{2.1E-14} & $\mathbf{197.251}$   & \multicolumn{1}{c|}{63.49}                   & \multicolumn{1}{c|}{$2^{(+)}$}                \\
\multicolumn{1}{|c}{20K}    & \multicolumn{1}{c|}{0.05}  & \multicolumn{1}{r}{69.760}  & \multicolumn{1}{r|}{7.9E-14} & $\mathbf{257.641}$   & \multicolumn{1}{c|}{52.15}                   & \multicolumn{1}{c|}{$2^{(+)}$}                \\
\multicolumn{1}{|c}{30K}    & \multicolumn{1}{c|}{0.05}  & 102.969                     & \multicolumn{1}{r|}{4.3E-14} & $\mathbf{259.517}$   & \multicolumn{1}{c|}{47.55}                   & \multicolumn{1}{c|}{$2^{(+)}$}                \\
\multicolumn{1}{|c}{40K}    & \multicolumn{1}{c|}{0.05}  & 136.956                     & \multicolumn{1}{r|}{8.6E-14} & $\mathbf{260.197}$   & \multicolumn{1}{c|}{45.46}                   & \multicolumn{1}{c|}{$2^{(+)}$}                \\
\multicolumn{1}{|c}{50K}    & \multicolumn{1}{c|}{0.05}  & 174.840                     & \multicolumn{1}{r|}{5.7E-14} & $\mathbf{260.955}$   & \multicolumn{1}{c|}{43.64}                   & \multicolumn{1}{c|}{$2^{(+)}$}                \\ \hline
\multicolumn{1}{|c}{10K}    & \multicolumn{1}{c|}{0.10}  & \multicolumn{1}{r}{36.550}  & \multicolumn{1}{r|}{2.1E-14} & $\mathbf{269.784}$   & \multicolumn{1}{c|}{76.62}                   & \multicolumn{1}{c|}{$2^{(+)}$}               \\
\multicolumn{1}{|c}{20K}    & \multicolumn{1}{c|}{0.10}  & \multicolumn{1}{r}{69.760}  & \multicolumn{1}{r|}{7.2E-14} & $\mathbf{275.981}$   & \multicolumn{1}{c|}{65.69}                   & \multicolumn{1}{c|}{$2^{(+)}$}                \\
\multicolumn{1}{|c}{30K}    & \multicolumn{1}{c|}{0.10}  & \multicolumn{1}{r}{102.969} & \multicolumn{1}{r|}{4.3E-14} & $\mathbf{277.891}$   & \multicolumn{1}{c|}{61.48}                   & \multicolumn{1}{c|}{$2^{(+)}$}                \\
\multicolumn{1}{|c}{40K}    & \multicolumn{1}{c|}{0.10}  & \multicolumn{1}{r}{136.956} & \multicolumn{1}{r|}{8.6E-14} & $\mathbf{278.571}$   & \multicolumn{1}{c|}{59.67}                   & \multicolumn{1}{c|}{$2^{(+)}$}                \\
\multicolumn{1}{|c}{50K}    & \multicolumn{1}{c|}{0.10}  & 174.448                     & \multicolumn{1}{r|}{2.77}     & 2$\mathbf{79.329}$   & \multicolumn{1}{c|}{58.05}                   & \multicolumn{1}{c|}{$2^{(+)}$}                \\ \hline
\end{tabular}
\end{table*}
\begin{figure*}[t]
    \includegraphics[width=0.49\textwidth]{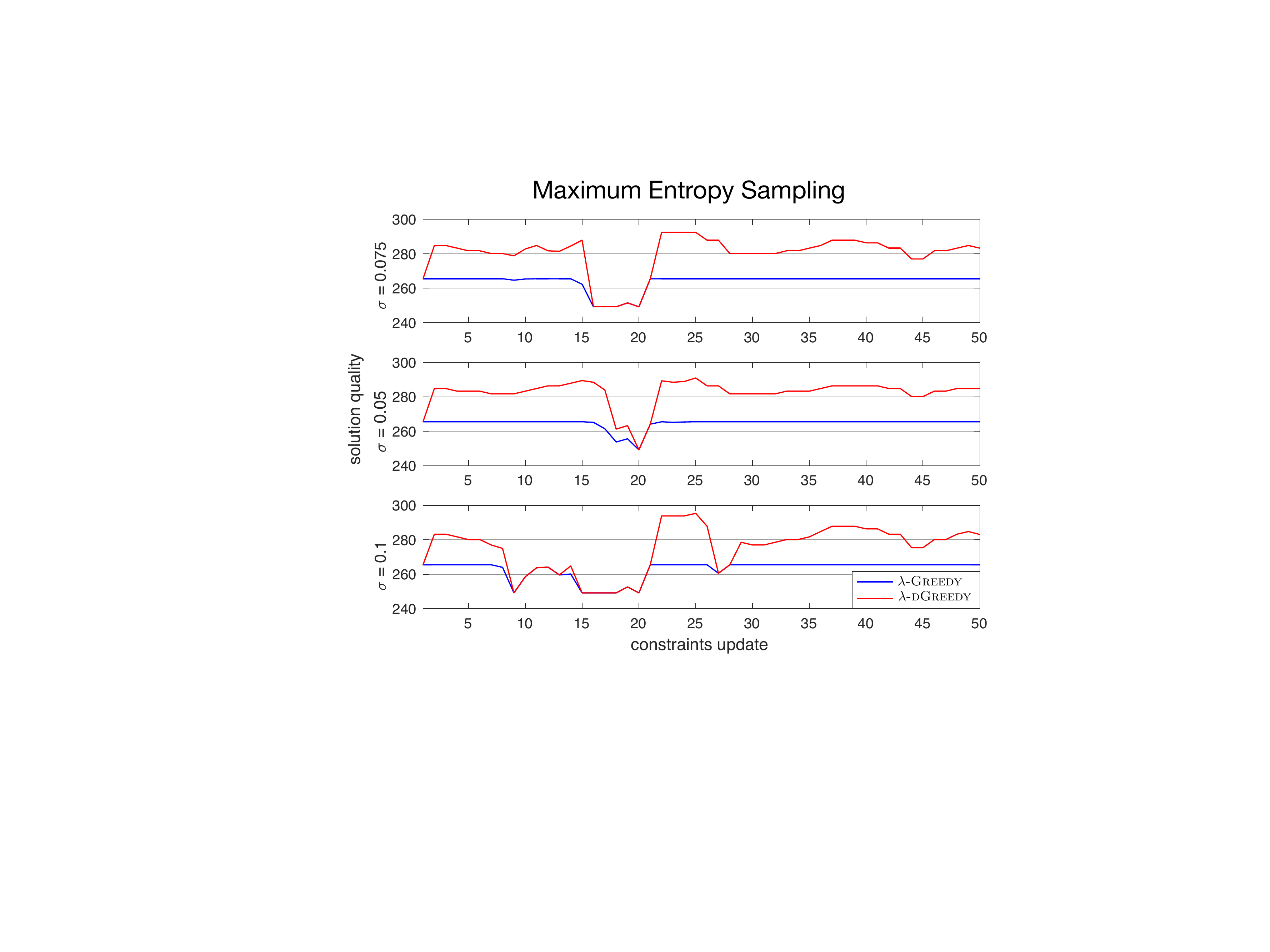}
    \includegraphics[height=182px, width=0.49\textwidth]{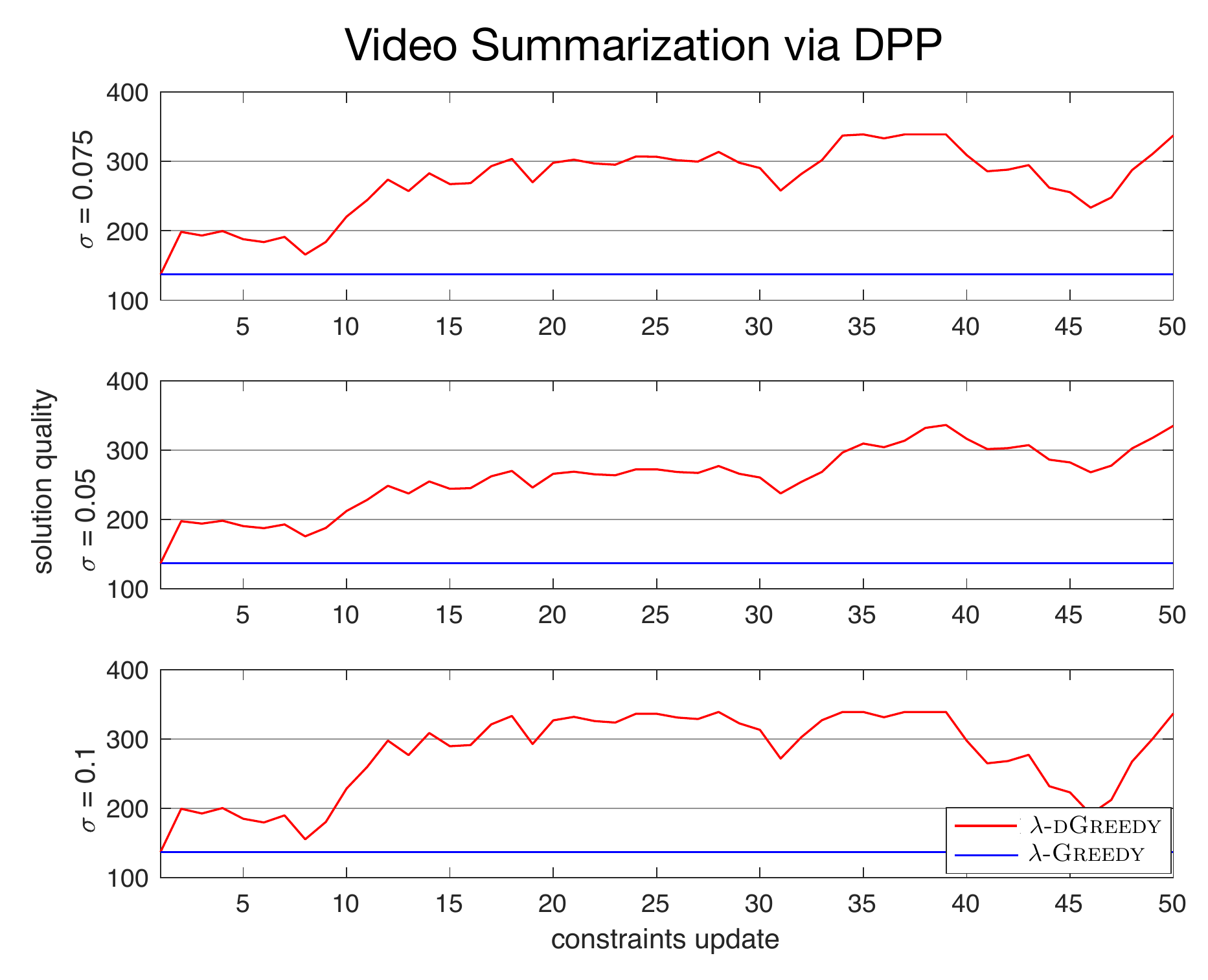}
\caption{The solution quality achieved by the \kgreedy{} after each constraints' update, using a restart strategy, and the \dgreedy{} operator. Each plot shows results for a fixed standard deviation choice [$\sigma$] and fixed update frequency $\tau = 30K$ (for the Maximum Entropy Sampling) and $\tau = 50K$ (for Video Summarization).}\label{fig:plot_matroid}\label{fig:plot_knapsack}
\end{figure*}
We consider the problem of maximizing the entropy $f$ under a partition matroid constraint. This additional side constraint requires an upper bound on the number of sensors that can be chosen in given geographical areas. Specifically, we partition the total number of time series in seven sets, based on the continent in which the corresponding stations are located. Under this partition set, we then have seven independent cardinality constraints, one for each continent.\\
We use the Berkeley Earth Surface Temperature Study, which combines $1.6$ billion temperature reports from $16$ preexisting data archives. This archive contains over $39000$ unique stations from around the world. More information on the Berkeley Earth project can be found in~\cite{BerkeleyWeb}. Here, we consider unique time series defined as the average monthly temperature for each station. Taking into account all data between years $2015$-$2017$, we obtain $2736$ time series from the corresponding stations. Our experimental framework follows along the lines of~\cite{DBLP:journals/corr/abs-1811-05351}.

In our dynamic setting, for each continent, a given parameter $d_i$ is defined as a percentage value of the overall number of stations available on that continent, for all $i \in [7]$. We let parameters $d_1, \dots, d_7$ vary over time, as to simulate a setting where they are updated dynamically. This situation could occur when operational costs slightly vary overtime. We initially set all parameters to use $50\%$ of the available resources, and we introduce a variation of these parameters at regular intervals, according to $\mathcal{N}(0, \sigma^2)$, a Gaussian distribution of mean $0$ and variance $\sigma^2$, for all $i \in [7]$.\\ 
We consider various choices for the standard deviation $\sigma$, and various choices for the time span between one dynamic update and the next one (the parameter $\tau$). For each choice of $\sigma$ and $\tau$, we consider a total of $50$ sequences of changes. We perform statistical validation using the Kruskal-Wallis test with $95$\% confidence. In order to compare the results, we use the Bonferroni post-hoc statistical procedure. This method is used for multiple comparisons of a control algorithm against two or more other algorithms. We refer the reader to \cite{Corder09} for more detailed descriptions of these statistical tests.

We compare the results in terms of the solution quality achieved at each dynamic update by the \kgreedy{} and the \dgreedy{}. We summarize our results in the Table~\ref{tbl:greedy_matroid} (left) as follows. The columns correspond to the results for \kgreedy{} and the \dgreedy{} respectively, along with the mean value, standard deviation, and statistical comparison. The symbol $X^{(+)}$ is equivalent to the statement that the algorithm labelled as $X$ significantly outperformed the other one.\\
Table~\ref{tbl:greedy_matroid} (left) shows that the \dgreedy{} has a better performance than the \kgreedy{} algorithm with restarts, when dynamic changes occur, especially for the highest frequencies $\tau = 10K, 20K$. This shows that the \dgreedy{} is suitable in settings when frequent dynamic changes occur. The \kgreedy{} yields improved performance with lower frequencies, but it under-perform the \dgreedy{} on our dataset.\\
Figure~\ref{fig:plot_matroid} (left) shows the solution quality values achieved by the \kgreedy{} and the \dgreedy{}, for different choices of the standard deviation $\sigma= 0.075, 0.05, 0.1$. Again, we observe that the \dgreedy{} finds solutions that have better quality than the \kgreedy{} with restarts. Even though the \dgreedy{} in some cases aligns with the \kgreedy{} with restarts, the performance of the \dgreedy{} is clearly better than that of the simple \kgreedy{} with restarts. The code for this set of experiments is available upon request.

\paragraph{Determinantal Point Processes}
We conclude with a dynamic set of experiments on a video summarization task as in Section \ref{sec:experimental_framework}. We define the corresponding matrix $L$ using the quality-diversity decomposition, as proposed in \cite{DBLP:conf/nips/KuleszaT10}. Specifically, we define the coefficients $L_{i,j}$ of this matrix as
$L_{i,j} = q(i) k(i,j) q(j)$, with $q(i)$ representing the quality of the $i$-th frame and $k(i,j)$ being the diversity between the $i$-th and $j$-th frame.\\
For the diversity measure $k$, we consider commonly used features $f \in F$ for pictures, and we use these features to define corresponding feature vectors $v^f_i$ for each frame $i$. Then the diversity measure is defined as
\[
    k(i,j) = \exp\left( -\sum_{f \in F} \frac{||v^f_i - v^f_j||_2^2}{\sigma_f}\right),
\]
with $\sigma_f$ a parameter for this feature\footnote{In our setting we combine the parameters $\sigma_{\text{\textsc{Color2}}} = \sigma_{\text{\textsc{Color3}}}$ and $\sigma_{\text{\textsc{SIFT256}}} = \sigma_{\text{SIFT512}}$.}. To learn these parameters we use the Markov Chain Monte Carlo (MCMC) method (see \cite{DBLP:conf/icml/AffandiFAT14}).

We use movie clips from the Frames Labeled In Cinema dataset \cite{FLICWeb}. We use 16 movies with 150-550 frames each to learn the parameters and one test movie with approximately 400 frames for our experiments. For each movie, we generate 5-10 samples (depending on the total amount of frames) of sets with 10-20 frames as training data. We then use MCMC on the training data to learn the parameters for each movie. When testing the \kgreedy{} and the \dgreedy{}, we use the sample median of the trained parameters.

In this set of experiments, we consider a constraint by which the set of selected frames must not exceed a memory threshold. We define a cost function $c(S)$ as the sum of the size of each frame in $S$. As each frame comes with its own size in memory, choosing the best frames under certain memory budget is equivalent to maximizing a submodular function under a linear knapsack constraint.\\
The weight $W$ is given range $[0\%, 100\%]$, with respect to the total weight $c(V)$, and it is updated dynamically throughout the optimization process, according to a Gaussian distribution $\mathcal{N}(0, \sigma^2)$, for a given variance $\sigma^2$. This settings simulates a situation by which the overall available memory exhibits small frequent variation. 

We select various parameter choices for the standard deviation $\sigma$, and the frequency $\tau$ with which a dynamic update occurs. We investigate the settings $\sigma$ = $0.075$, $0.05$, $0.1$, and $\tau$ = $10K$, $20K$, $30K$, $40K$, $50K$. Each combination of $\sigma$ and $\tau$ carries out $50$ dynamic changes. Again, we validate our results using the Kruskal-Wallis test with $95$\% confidence. To compare the obtained results, we apply the Bonferroni post-hoc statistical test \cite{Corder09}.

The results are presented in the Table~\ref{tbl:greedy_knapsack} (right). We observe that the \dgreedy{} yields better performance than the \kgreedy{} with restarts when dynamic changes occur. Similar findings are obtained when comparing a different standard deviation choice $\sigma$ = $0.075$, $0.05$, $0.1$. Specifically, for the highest frequency $\tau = 10K$, the \dgreedy{} achieves better results by approximately one order of magnitude.\\
Figure~\ref{fig:plot_knapsack} (right) shows the solution quality values obtained by the \dgreedy{} and the \kgreedy{}, as the frequency is set to $\tau = 50K$. It can be observed that, for $\sigma$ = $0.075$, $0.05$, $0.1$, the \dgreedy{} significantly outperforms the \kgreedy{} with restarts, for almost all $50$ updates. The code for this set of experiments is available upon request.

\section{CONCLUSION}
\label{sec:conclusions}

Many real-world optimization problems can be approached as submodular maximization with multiple knapsack constraints (see Problem \ref{problem:static}). Previous studies for this problem show that it is possible to approach this problem with a variety of heuristics. These heuristics often involve a local search, and require continuous relaxations of the discrete problem, and they are impractical. We propose a simple discrete greedy algorithm (see Algorithm \ref{alg:static}) to approach this problem, that has polynomial run time and yields strong approximation guarantees for functions with bounded curvature (see Definition \ref{def:generalized_curvature} and Theorem~\ref{theorem:static}).

Furthermore, we study the problem of maximizing a submodular function, when knapsack constraints involve dynamic components. We study a setting by which the weights $W_i$ of a given set of knapsack constraints change overtime. To this end, we introduce a robust variation of our $\kgreedy{}$ algorithm that allows for handling dynamic constraints online (see Algorithm \ref{alg:dynamic}). We prove that this operator allows to maintain strong approximation guarantees for functions with bounded curvature, when constraints change dynamically (see Theorem \ref{theorem:dynamic}).

We show that, in static settings, Algorithm \ref{alg:static} competes with \fantomalg{}, which is a popular algorithm for handling these constraints (see Table \ref{tbl:static_runtime}). Furthermore, we show that the \dgreedy{} is useful in dynamic settings. To this end, we compare the \dgreedy{} with the \kgreedy{} combined with a restart policy, by which the optimization process starts from scratch at each dynamic update. We observe that the \dgreedy{} yields significant improvement over a restart in dynamic settings with limited computational time budget (see Figure \ref{fig:plot_matroid} and Table \ref{tbl:greedy_matroid}).

\section*{Acknowledgement}
This work has been supported by the Australian Research Council through grant DP160102401 and by the South Australian Government through the Research Consortium "Unlocking Complex Resources through Lean Processing".

\newpage
\bibliography{ecai}

\newpage
\onecolumn
\section*{Appendix: Missing Proofs}
\subsection*{Proof of Theorem \ref{theorem:static}}

We first observe that, for a given $\lambda$, the \kgreedy{} has two phases. During the first phase, \kgreedy{} adds points to the current solution iteratively. During the second phase, \kgreedy{} finds the optimum among single-element sets and sets containing only elements whose cost is lower-bounded with $c_i(e) > \lambda W_i /\lambda$ for all $i \in [n]$. The greedy procedure requires at most $\bigo{n^2}$ steps, while the second procedure requires at most $\bigo{n^{k/ \lambda}}$ steps, since all feasible solutions such that $c_i(e) > \lambda W_i /k$ consist of at most $k/\lambda$ points. Hence, the resulting run time is $\bigo{n^{\max (k/\lambda, 2)}}$ run time.

We now prove the \kgreedy{} yields the desired approximation guarantee. To this end, we define
\begin{align*}
    \mathcal{V}= V_1 = \{e \in V \colon c_j(e) \leq \lambda W_j/k \ \forall j \in [k]\} \quad \mbox{and} \quad V_2 = V \setminus V_1.
\end{align*}
Without loss of generality we assume that $W_1 = \dots = W_k$. We denote with $W$ the weight of each knapsack. Let $\sigma_t$ be a solution of the greedy phase at time step $t$. Let $r \geq 0$ be the smallest index, such that
\begin{enumerate}
    \item $c_j(\sigma_t) \leq W$ for all $j \in [k]$, for all $t \in [r-1]$;
    \item there exists $j \in [k]$ such that $c_j(\sigma_{r}) > W$.
\end{enumerate}
In other words, $r$ is the first point it time such that the new greedy solution does not fulfill all knapsacks at the same time. We first prove that either the solution $\sigma_{r-1}$ or the point $v^* = \argmax_{e \in V} f(e)$ yields a good approximation guarantee of $\opt \cap V_1$.

To simplify the notation, we define $f_t = f(\sigma_t) - f(\sigma_{t - 1})$ and $v_t = \sigma_t \setminus \sigma_{t - 1}$, for all $t \in [r]$. We have that it holds
\begin{align}
f_{\sigma_{t - 1}}(\opt \cap V_1) & \leq \sum_{e \in \opt \setminus \sigma_{t - 1}} f_{\sigma_{t - 1}}(e)\label{eq:thm1}\\   
& = \sum_{e \in (\opt \cap V_1) \setminus \sigma_{t - 1}} \max_j c_j(e) \frac{f_{\sigma_{t - 1}}(e)}{\max_j c_j(e)}\label{eq:thm2}\\
& \leq \frac{f_t}{\max_j c(v_t)} \sum_{e \in \opt \setminus \sigma_{t - 1}} \max_j c_j(e)\label{eq:thm3}\\
& \leq \frac{\lambda W}{\max_j c(v_i)} f_t \label{eq:thm4}
\end{align}
where \eqref{eq:thm1} follows from the assumption that $f$ is submodular; \eqref{eq:thm3} follows from \eqref{eq:thm2} due to the greedy choice of Algorithm \ref{alg:static}; \eqref{eq:thm4} uses the fact that $c(\opt) \leq W$, together with the fact that $c(e) \leq W\lambda/k$ for all $c \in V_1$, for all $j \in [k]$. Rearranging yields
\begin{equation}
f_t \geq \frac{\max_j c_j(v_i)}{\lambda W} (f(\sigma_{t-1}\cup (\opt \cap V_1)) - f(\sigma_{t-1})).\label{eq:thm41}
\end{equation}
To continue with the proof, we consider the following lemma, which follows along the lines of Lemma 1 in \cite{DBLP:journals/corr/abs-1811-05351}.
\begin{lemma}
\label{lemma:marginal_value}
Following the notation introduced above, define the set $J = \{t \in [n] \colon v_t \in \opt \}$. Then for any subset $T \subseteq V$ it holds 
\[
f(\sigma_{t} \cup T) \geq f(T) + (1 - \alpha) f(\sigma_t) - (1 - \alpha)\sum_{j \in [t]\cap J }f_j.
\]
for all $t \in [r]$.
\end{lemma}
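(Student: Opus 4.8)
The plan is to prove Lemma~\ref{lemma:marginal_value} by induction on $t$, exploiting the telescoping structure of the greedy chain $\sigma_0 \subseteq \sigma_1 \subseteq \dots \subseteq \sigma_t$ together with a term-by-term application of the generalized curvature inequality of Definition~\ref{def:generalized_curvature}. Throughout, write $\sigma_t = \sigma_{t-1} \cup \{v_t\}$ and $f_t = f_{v_t}(\sigma_{t-1})$, and recall that $f(\sigma_t) = \sum_{j \in [t]} f_j$ since $f(\emptyset) = 0$. The object I would control at each step is the marginal of the newly inserted element with respect to $\sigma_{t-1} \cup T$, namely $f(\sigma_t \cup T) - f(\sigma_{t-1}\cup T)$, comparing it against the pure greedy gain $f_t$.

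For the base case $t = 0$ we have $\sigma_0 = \emptyset$, so both sides reduce to $f(T)$ and the claim holds with equality. For the inductive step I would assume the bound at $t-1$ and split on whether $v_t \in T$. If $v_t \notin T$, I invoke Definition~\ref{def:generalized_curvature} with $\omega = v_t$, $S = \sigma_t$ and $\Omega = T$; this is legitimate because $v_t \in \sigma_t \setminus T$ and $v_t \notin \sigma_{t-1}$, and it yields
\[
f(\sigma_t \cup T) - f(\sigma_{t-1}\cup T) = f_{v_t}(\sigma_{t-1}\cup T) \geq (1-\alpha)\, f_{v_t}(\sigma_{t-1}) = (1-\alpha) f_t,
\]
since $(\sigma_t \cup T)\setminus\{v_t\} = \sigma_{t-1}\cup T$ and $\sigma_t \setminus \{v_t\} = \sigma_{t-1}$. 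Combining this with the induction hypothesis and $f(\sigma_t) = f(\sigma_{t-1}) + f_t$ gives the desired inequality, and because $t \notin J$ in this case the sum over $[t]\cap J$ is unchanged.

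The second case $v_t \in T$ is where the set $J$ enters. Here $\sigma_t \cup T = \sigma_{t-1}\cup T$, so the left-hand side does not grow, while rewriting $f(\sigma_{t-1}) = f(\sigma_t) - f_t$ inside the induction hypothesis produces exactly an extra $-(1-\alpha) f_t$ term; since $t \in J$, this is precisely the new summand of $-(1-\alpha)\sum_{j\in[t]\cap J} f_j$, closing the induction. The role of $J = \{t : v_t \in \opt\}$ is thus to record the steps at which a greedy pick already lies in $T$, so that its vanishing marginal is accounted for by a subtraction rather than silently lost.

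The main subtlety I anticipate is twofold. First, the telescoping naturally produces the index set $\{j \in [t] : v_j \in T\}$ rather than $J$ itself; these coincide precisely because every greedy element satisfies $v_j \in \mathcal{V} = V_1$ and the lemma is applied with $T = \opt \cap V_1$, so that $v_j \in T \iff v_j \in \opt$. Second, since $f$ is non-monotone one must apply the curvature inequality purely formally: the hypothesis $\omega \in S \setminus \Omega$ has to be checked at each step, and the bound $f_{v_t}(\sigma_{t-1}\cup T) \geq (1-\alpha)f_t$ is used irrespective of the sign of $f_t$. Neither point requires monotonicity, so the argument is expected to go through verbatim for functions of bounded, possibly negative, curvature.
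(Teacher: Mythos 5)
Your proof is correct, and it shares the paper's backbone --- propagating the curvature inequality of Definition~\ref{def:generalized_curvature} along the greedy chain $\sigma_0\subseteq\sigma_1\subseteq\dots\subseteq\sigma_t$ --- but your execution is genuinely more careful, and in fact more complete than the paper's own proof. The paper writes $f(\sigma_i\cup\opt)-f(\sigma_{i-1}\cup\opt)\geq(1-\alpha)\left(f(\sigma_i)-f(\sigma_{i-1})\right)$ for \emph{every} $i\in[t]$ and telescopes, arriving at $f(\sigma_t\cup\opt)\geq f(\opt)+(1-\alpha)f(\sigma_t)$. This applies the curvature definition with $\omega=v_i$, $S=\sigma_i$, $\Omega=\opt$ even at steps with $v_i\in\opt$, where its hypothesis $\omega\in S\setminus\Omega$ fails and the left-hand increment is actually $0$ (while $(1-\alpha)f_i$ may be positive); consequently the paper's displayed chain never derives the correction term $-(1-\alpha)\sum_{j\in[t]\cap J}f_j$ that the lemma asserts. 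Your case split is exactly the missing bookkeeping: when $v_t\notin T$ the curvature inequality applies verbatim, and when $v_t\in T$ the set $\sigma_t\cup T$ stagnates and rewriting $f(\sigma_{t-1})=f(\sigma_t)-f_t$ generates precisely the subtracted term. You are also right to flag the index-set subtlety: your induction yields the sum over $\{j\in[t]\colon v_j\in T\}$, which equals $[t]\cap J$ only because the lemma is invoked with $T=\opt\cap V_1$ and every greedy pick lies in $\mathcal{V}=V_1$; for arbitrary $T$ with $J$ tied to $\opt$, as literally stated, the inequality can fail (e.g.\ if some $v_t\in T\setminus\opt$ has $f_t>0$ and $\alpha<1$). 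In short: same mechanism, but your version proves the statement in the form in which it is actually used, and repairs the gaps in the paper's argument.
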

\begin{proof}
From the definition of curvature we have that
\[
f(\sigma_i \cup \opt ) - f(\sigma_{i - 1} \cup \opt) \geq (1 - \alpha )\left (f(\sigma_i) - f(\sigma_{i - 1}) \right ).
\]
for all $i = 1, \dots, t$. It follows that 
\begin{align}
f(\sigma_t \cup \opt) & \geq f(\sigma_{t - 1} \cup \opt) + (1 - \alpha )\left (f(\sigma_t) - f(\sigma_{t - 1}) \right )\label{eq:negative_curve1}\\
& \geq f(\emptyset \cup \opt) + \sum_{j = 1}^{t} (1 - \alpha )\left (f(\sigma_j) - f(\sigma_{j - 1}) \right )\label{eq:negative_curve2}\\
& = f(\opt) + (1 - \alpha )  \left (f(\sigma_t) - f(\emptyset) \right )\label{eq:negative_curve3},
\end{align}
where \eqref{eq:negative_curve2} follows by iteratively applying \eqref{eq:negative_curve1} to the $f(U_{j - 1} \cup \opt)$, and \eqref{eq:negative_curve3} follows by taking the telescopic sum.
\end{proof}
Note that Lemma \ref{lemma:marginal_value} yields $f(\sigma_{t} \cup (\opt \cap V_1)) \geq f(\opt \cap V_1) + (1 - \max(1, \alpha)) f(\sigma_t)$, since $\alpha \in [0, 1]$ if and only if the function $f$ is monotone. Combining this observation with \eqref{eq:thm41} yields
\begin{align*}
f_t &  \geq \frac{\max_j c_j(v_i)}{{\max(1, \alpha)}\lambda W} f(\opt \cap V_1) - \frac{\max_j c_j(v_i)}{\lambda W} \sum_{i \in [t - 1]}f_i,
\end{align*}
where we have simply used the telescopic sum over the $f(\sigma_t)$. Defining $x_t = f_t/f(\opt \cap V_1)$ for all $t \in [r]$ we can write the inequality above as
\begin{equation}
\frac{{\max(1, \alpha)}\lambda W}{\max_j c_j(v_t)}x_t + \max(1, \alpha) \sum_{i \in [t - 1]} x_i \geq 1.\label{eq:thm5}
\end{equation}
We conclude the proof by showing that any array of solutions $(x_1, \dots, x_n)$ with coefficients $x_i \in [0, 1]$ that fulfils the LP as in \eqref{eq:thm5} yields
\begin{equation}
\sum_{t\in[r]} x_t \geq \sum_{t\in[r]} \frac{\max_j c_j(v_t)}{{\max(1, \alpha)}\lambda W}\prod_{i \in [t-1]}\left ( 1 - \frac{\max_j c_j(v_i)}{\lambda W} \right ).\label{eq:thm6}
\end{equation}
In order to prove \eqref{eq:thm6}, since it holds $x_t \in [0, 1]$ for all $t \in [r]$, we can simplify our setting, by studying the system
\begin{equation}
\frac{{\max(1, \alpha)} \lambda W}{\max_j c_j(v_t)} x_t + \max(1, \alpha) \sum_{i \in [t - 1]} x_i = 1.\label{eq:thm7}
\end{equation}
This is due to the fact that the sum of the coefficients of any solution of \eqref{eq:thm7} are upper-bounded by the sum of the coefficients of a solution of \eqref{eq:thm6}. We continue with the following simple lemma.
\begin{lemma}
\label{lemma:recursion}
Let $(x_1, \dots, x_r)$ be a solution of the LP as in \eqref{eq:thm7}. Than it holds
\[
x_t \geq \frac{\max_j c_j(v_t)}{{\max(1, \alpha)} \lambda W}\prod_{i \in [t-1]}\left ( 1 - \frac{\max_j c_j(v_i)}{\lambda W} \right ),
\]
for all $t \in [r]$.
\end{lemma}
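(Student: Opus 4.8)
The plan is to prove the bound by induction on $t$, by solving the linear system \eqref{eq:thm7} essentially explicitly. To keep the notation light I would write $\beta = \max(1,\alpha)$ and $a_t = \max_j c_j(v_t)$, so that the defining equation of the LP reads $\tfrac{\beta \lambda W}{a_t}\,x_t + \beta \sum_{i \in [t-1]} x_i = 1$ for every $t \in [r]$, and the target bound becomes $x_t \geq \tfrac{a_t}{\beta \lambda W}\prod_{i \in [t-1]}(1 - a_i/(\lambda W))$. For the base case $t = 1$ the sum over $[t-1]$ is empty, so the defining equation immediately gives $x_1 = a_1/(\beta \lambda W)$, which is exactly the claimed expression since the product over $[0]$ equals $1$.

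For the inductive step I would eliminate the partial sum $\sum_{i \in [t-1]} x_i$ by subtracting the defining equation at index $t-1$ from the one at index $t$. Since the two sums differ only by the single term $x_{t-1}$, this produces $\tfrac{\beta \lambda W}{a_t}\,x_t - \tfrac{\beta \lambda W}{a_{t-1}}\,x_{t-1} = -\beta\, x_{t-1}$, which rearranges to the clean two-term recurrence
\[
x_t = \frac{a_t}{a_{t-1}}\left(1 - \frac{a_{t-1}}{\lambda W}\right) x_{t-1}.
\]
Feeding in the inductive hypothesis for $x_{t-1}$, the factor $a_t/a_{t-1}$ cancels the leading $a_{t-1}$, the new factor $\left(1 - a_{t-1}/(\lambda W)\right)$ extends the product from $[t-2]$ to $[t-1]$, and one recovers precisely the claimed formula (in fact with equality, since \eqref{eq:thm7} is a system of equalities).

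The one point that needs care is that the recurrence preserves the non-negative sign of the expression, which requires $1 - a_{t-1}/(\lambda W) \geq 0$. This is where I would invoke that every greedily chosen point lies in $\mathcal{V} = V_1$, so that $a_{t-1} = \max_j c_j(v_{t-1}) \leq \lambda W/k$ and hence the factor is at least $1 - 1/k \geq 0$. With this observation the induction closes cleanly, and since the derived identity is an equality, the stated inequality follows \emph{a fortiori}. I expect this sign/feasibility bookkeeping to be the only subtle step; the algebra of the recurrence itself is routine once the subtraction trick is used.
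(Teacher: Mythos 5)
Your proof is correct and is essentially the paper's own argument: the paper likewise differences consecutive equations of \eqref{eq:thm7} (after the substitution $y_t = x_t/c_t$ with $c_t = \max_j c_j(v_t)/(\max(1,\alpha)\lambda W)$) to obtain the same two-term recurrence, solves it exactly, and reads off the claimed bound as an equality. Your closing sign/feasibility remark is harmless but not needed, since the derived identity is an equality and therefore implies the stated inequality regardless of sign.
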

\begin{proof}
Define $c_t = \max_j c_j(v_t)/({\max(1, \alpha)}\lambda W)$ for all $t \in [r]$. Then the LP as in \eqref{eq:thm7} can be written as
\[
\frac{x_{t}}{c_{t}} = 1 - \max(1, \alpha) \sum_{i = 1}^{t-1} x_t.
\]
By defining $y_t = x_t/c_t$ for all $t \in [r]$, we have that $y_t - y_{t - 1} = - \max(1, \alpha) x_{t-1} = - \max(1, \alpha) c_{t - 1}y_{t-1}$, and we obtain the following recurrent relation $y_t + (\max(1, \alpha) c_t - 1)y_{t - 1} = 0$, for all $t > 1$. This is a recurrent linear equation with solutions 
\[
y_t = \prod_{j = 1}^{t - 1} (1 - \max(1, \alpha) c_j).
\]
The claim follows, by substituting $x_t$ in the equation above.
\end{proof}
Hence, we have that it holds
\begin{align}
f(\sigma_r) & = f(\opt \cap V_1)\sum_t x_t \label{eq:thm8} \\
& \geq f(\opt \cap V_1)\sum_t \frac{\max_j c_j(v_t)}{{\max(1, \alpha)}kW}\prod_{i \in [t-1]}\left ( 1 -  \frac{\max_j c_j(v_i)}{\lambda W} \right )\label{eq:thm9}\\
& \geq  \frac{f(\opt \cap V_1)}{\max(1, \alpha)} \left (1 -  \prod_{i \in [r]}\left ( 1 - \frac{\max_j c_j(v_i)}{\lambda W} \right ) \right )\label{eq:thm10}\\
& \geq  \frac{f(\opt \cap V_1)}{\max(1, \alpha)} \left (1 -  \exp \left \{ - \sum_{i \in [r]}\frac{\max_j c_j(v_i)}{\lambda W} \right \} \right ),\label{eq:thm11}
\end{align}
where \eqref{eq:thm8} holds by taking the telescopic sum; \eqref{eq:thm9} follows from Lemma~\ref{lemma:recursion}; \eqref{eq:thm10} follows via standard calculations; \eqref{eq:thm11} follows because $1 - x \leq e^{-x}$. Consider an index $\ell$ such that $c_{\ell}(\sigma_r) > W$. We have that it holds 
\begin{align*}
f(\sigma_r) & \geq \frac{1}{\max(1, \alpha)} \left (1 - \exp \left \{ - \max(1, \alpha) \sum_{i \in [r]}\frac{\max_j c_j(v_i)}{\lambda W} \right \} \right )f(\opt \cap V_1) \\
& \geq \frac{1}{\max(1, \alpha)} \left (1 - \exp \left \{  - \max(1, \alpha) \sum_{i \in [r]}\frac{c_\ell(v_i)}{\lambda W} \right \} \right )f(\opt \cap V_1) \\
& \geq \frac{1}{\max(1, \alpha)} \left (1 -  e^{-1/\lambda } \right )f(\opt \cap V_1).
\end{align*}

We conclude by proving that Algorithm \ref{alg:static} yields the the desired approximation guarantee. To this end, let 
\begin{align*}
U=\argmax_{\{ W \subseteq V_2 \colon c_j(U) \leq W_j \ \forall j \in [k] \}}\left\{ f(W) \geq f(\opt \cap V_2)\right\}.
\end{align*}

Hence, following the notation of Algorithm \ref{alg:static}, and denoting with $v*$ the point with maximum $f$-value among the singletons, it follows that
\begin{align*}
\argmax \{f(\sigma_{r - 1}), f(U),f(v^*)\} & \geq \frac{1}{3} (f(\sigma_{r - 1}) + f(U) +f(v^*))\\
& \geq \frac{1}{3} (f(\sigma_{r - 1}) + f(v_r) + f(U))\\
& \geq \frac{1}{3} f(\sigma_{r} + f(U)) \\
& \geq \frac{1}{3\max(1, \alpha)} \left (1 -  e^{-1/\lambda } \right )f(\opt \cap V_1) + f(\opt \cap V_2) \\
& \geq \frac{1}{3\max(1, \alpha)} \left (1 -  e^{-1/\lambda } \right )f(\opt),
\end{align*}
where the last inequality follows from submodularity. The claim follows.
\subsection*{Proof of Theorem \ref{theorem:dynamic}}
We prove that the claim holds after $\ell > 1$ weights updates were given since the beginning of the optimization process. We denote with $W_i^* = \{W_{i,1},\dots, W_{i,k}\}$ the $i$-th new sequence of dynamic weights. Furthermore, define $\chi_{\ell} = \min \{ \absl{\chi(c, W_{\ell})}, \absl{\chi(c, W_{\ell - 1})}\}$ and $\mathcal{V}_\ell = \{e \in V \colon c_i(e) \leq \lambda W_{\ell, i}/k \ \forall i \in [k]\}$., let $\sigma_{\ell, t}$ be as the current solution at time step $t$, after new dynamic weights $W_\ell^*$ were given. Let $r_\ell \geq 0$ be the smallest index, such that
\begin{enumerate}
    \item $c_i(\sigma_{\ell, t}) \leq W_{\ell, i}$, for all $e \in V$, $t \in [r_{\ell} - 1]$ and for all $i \in [k]$;
    \item $c_i(\sigma_{\ell, r_{\ell}}) > W_{\ell, i}$, for some $i \in [k]$.
\end{enumerate}
In other words, $r_{\ell}$ is the first point it time, after the ${\ell}$-th weight update, such that the set maximizing the greedy step is not feasible. We prove that the \dgreedy{} maintains the desired approximation guarantee. Note that at each step, the \dgreedy{} requires at most $n$ calls to the value oracle function. Then the \kgreedy{} with restarts requires $\bigo{n (n - \chi_{\ell}))}$ additional run time to construct the solution $\sigma_{{\ell}, r_{\ell}}$.

In order to prove the desired lower-bound on the approximation guarantee, we prove that the solution $\sigma_{\ell, r_{\ell}}$ is identical to a solution of the same size constructed by the \kgreedy{} starting from the empty set, under side constraints specified by $W_{\ell}^*$. The claim then follows, by readily applying Theorem \ref{theorem:static}.

Define $d_i = \sigma_{\ell, r_{\ell}}$ and denote with $\{v_{{\ell}, 1}, \dots, v_{{\ell}, t}, \dots, v_{{\ell}, d_{\ell}}\}$ the points of $\sigma_{\ell, r_{\ell}}$ sorted in the order that they were added to the solution. Define the sets
\[
\begin{array}{lcl}
\hat{\sigma}_{{\ell}, t}  = \emptyset & \mbox{for } & t = 0;\\
\hat{\sigma}_{{\ell}, t}  = \{v_{{\ell}, 1}, \dots, v_{{\ell}, t}\} & \mbox{for } & t \in [d_{\ell}];\\
\end{array}
\]
Note that, according to this definition, it holds $\hat{\sigma}_{{\ell}, t} = \sigma_{{\ell}, r_{\ell}}$ for all ${\ell} \in [m]$. We prove that the solution $\sigma_{{\ell}, r_{\ell}}$ is equal to a solution if the same size constructed by the \kgreedy{} from scratch, by showing that it holds
\begin{equation}
\label{eq:dynamic_knapsack}
v_{{\ell}, t} = \argmax_{e \in V\setminus \hat{\sigma}_{{\ell}, t - 1}} \frac{f_{\hat{\sigma}_{{\ell}, t - 1}}(e)}{ \max_j c_j(e)}
\end{equation}
for all $t \in [d_{\ell}]$, for all ${\ell} \in [m]$, with an induction argument on ${\ell}$. The base case for ${\ell} = 1$ holds due to the greedy rule, since in this case the optimization process consists of greedily adding points to the current solution, starting from the empty set.

For the inductive case, suppose that the claim holds for all runs up to ${\ell} - 1$. Note that, since the functions $c_j$ are linear, it holds
\[
c_j(T) \leq \sum_{e \in T} \max_j c_j(e) \leq \sum_{e \in \chi_{\ell}} \max_j c_j(e) \leq W_{\ell, j},
\]
for all $T \in V$ such that $\absl{T} \leq \absl{\chi_{\ell}}$. Hence, all subsets of $V$ of size at most $T$ are feasible solutions in the $k$-knapsack intersection. Similarly, one can prove that $c_j(T) \leq W_{\ell - 1, j}$, for all $T \in V$ such that $\absl{T} \leq \absl{\chi_{\ell}}$. Hence, all solutions with size at most $\absl{\chi}$ are feasible in both constraints defined by $W_{\ell-1}^*$ and $W_{\ell}^*$. In particular, solutions obtained by Algorithm \ref{alg:static} up to size at most $\absl{\chi_{\ell}}$, are identical in both constraints, i.e. $\hat{\sigma}_{\ell, t} = \hat{\sigma}_{\ell -1, t}$. Combining these observations with the inductive hypothesis on $v_{{\ell} - 1, t}$, we get
\begin{equation*}
v_{{\ell}, t} = v_{{\ell} - 1, t} = \argmax_{e \in V\setminus \hat{\sigma}_{{\ell} - 1, t - 1}} \frac{{f}_{\hat{\sigma}_{{\ell} - 1, t - 1}}(e)}{\max_j c_j(e)} = \argmax_{e \in V\setminus \hat{\sigma}_{i, t - 1}} \frac{{f}_{\hat{\sigma}_{{\ell}, t - 1}}(e)}{\max_j c_j(e)},
\end{equation*}
for all $t \leq \absl{\chi_\ell}$. We conclude that \eqref{eq:dynamic_knapsack} holds for all $v_{i, t}$ with $t \leq  \absl{\chi_{\ell}}$. Note that for $\absl{\chi_{\ell}} < t \leq r_{\ell}$ the claim holds due to the greedy rule, hence \eqref{eq:dynamic_knapsack} holds. Furthermore, note that the new solution is in the set $\mathcal{V}_{\ell}$. Combining Theorem \ref{theorem:static} with \eqref{eq:dynamic_knapsack}, we conclude that $\sigma_{\ell, r_{\ell}}$ yields the desired approximation guarantee.
\end{document}